%% file: longcat/copc.tex
\documentclass{article}

\usepackage{CJKutf8}
\usepackage{longcat/longcat_style}
\usepackage{adjustbox}
\usepackage[utf8]{inputenc} %
\usepackage[T1]{fontenc}    %
\usepackage{newunicodechar}
\usepackage{hyperref}       %
\usepackage{xcolor}
\usepackage[normalem]{ulem} %
\hypersetup{
    colorlinks=true,      %
    linkcolor=blue,      %
    urlcolor=blue,       %
    citecolor=blue,      %
    linkbordercolor=blue, %
    urlbordercolor=blue,
    citebordercolor=blue,
    pdfborderstyle={/S/U/W 1}, %
}
\usepackage{float}
\usepackage{url}            %
\usepackage{booktabs}       %
\usepackage{amsfonts}       %
\usepackage{nicefrac}       %
\usepackage{microtype}      %
\usepackage{lipsum}		%
\usepackage{graphicx}
\usepackage{xcolor}
\usepackage{colortbl}
\usepackage[most]{tcolorbox}
\usepackage{listings}
\usepackage{natbib}
\usepackage{doi}
\usepackage{amsmath}
\usepackage{amssymb} %
\usepackage{xspace}
\usepackage{enumitem}
\usepackage{multirow}
\usepackage{subcaption} 
\usepackage{makecell}
\usepackage{hyperref, cleveref}
\usepackage{pifont}
\usepackage[inkscapelatex=false]{svg}
\usepackage{caption}
\DeclareCaptionLabelSeparator{pipe}{ | }
\usepackage{tcolorbox}
\newtcolorbox{coloredquote}[1][]{
    colback=green!5!white,  
    colframe=green!70!black, 
    boxrule=2pt,
    arc=7pt,
    left=6pt,
    right=6pt,
    top=4pt,
    bottom=4pt,
    title=#1
}

\setlist[itemize]{leftmargin=*}
\setlist[enumerate]{leftmargin=*}
\setlist[description]{leftmargin=*}
\input{math_commands}

\definecolor{copcshade}{HTML}{EEECFF}
\definecolor{copcgain}{HTML}{009B63}

\definecolor{promptblue}{RGB}{66,127,175}
\definecolor{promptbg}{RGB}{235,245,251}

\newtcolorbox{promptbox}[1]{
    enhanced,
    breakable,
    colback=promptbg,
    colframe=promptblue,
    colbacktitle=promptblue,
    coltitle=white,
    title={#1},
    fonttitle=\bfseries\itshape\large,
    boxrule=1pt,
    arc=2.5mm,
    outer arc=2.5mm,
    titlerule=0pt,
    left=3mm,
    right=3mm,
    top=2.5mm,
    bottom=2.5mm,
    lefttitle=3mm,
    righttitle=3mm,
    toptitle=1.5mm,
    bottomtitle=1.5mm,
    before skip=6pt,
    after skip=6pt,
}

\lstdefinestyle{promptcode}{
    basicstyle=\ttfamily\small,
    columns=fullflexible,
    breaklines=true,
    keepspaces=true,
    showstringspaces=false,
    frame=none,
    backgroundcolor=\color{promptbg},
    xleftmargin=0pt,
    xrightmargin=0pt,
    aboveskip=2pt,
    belowskip=2pt
}

\title{COPC: Coupled Off-Policy Correction for Asynchronous LLM Reinforcement Learning}

\author{%
    \textbf{%
    Zicheng Hu\textsuperscript{1,2,3}\thanks{Work was done during the internship at Meituan.}\hspace{1.0em}%
    Zhijian Zhou\textsuperscript{1}\hspace{1.0em}%
    Xuan Zhang\textsuperscript{1,2}\hspace{1.0em}%
    Yuchen Liu\textsuperscript{1}\hspace{1.0em}%
    Cheng Chen\textsuperscript{3}\hspace{1.0em}%
    Yuan Li\textsuperscript{1,2}%
    } \\[2pt]
    \textbf{%
    Qi Gu\textsuperscript{2}\hspace{1.0em}%
    Yan Feng\textsuperscript{2}\hspace{1.0em}%
    Hongyan Hao\textsuperscript{2}\footnotemark[2]\hspace{1.0em}%
    Chao Qu\textsuperscript{1}\thanks{Hongyan Hao and Chao Qu are the corresponding authors.}%
    } \\[3pt]
    \textsuperscript{1}Fudan University \quad
    \textsuperscript{2}Meituan \quad
    \textsuperscript{3}East China Normal University \\
    \texttt{baodenayane@gmail.com \quad haohongyan02@meituan.com \quad quchao@fudan.edu.cn}
}

\renewcommand{\headeright}{\raisebox{-0.2\height}{\includegraphics[height=1.8em]{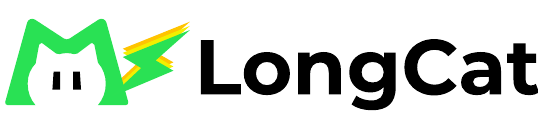}}}
\renewcommand{\shorttitle}{COPC: Coupled Off-Policy Correction}

\begin{document}
\maketitle
\setcounter{footnote}{0}

\begin{abstract}
    Asynchronous RL improves throughput in large language model post-training by decoupling rollout generation from optimization, at the cost of training on stale trajectories. Existing methods primarily address token-level policy mismatch through importance-ratio control in the actor objective, providing \emph{policy-side correction}. We show that this alone is insufficient because advantage estimates inherit mismatch from behavior-policy continuations, a phenomenon we term \emph{advantage staleness}. This motivates a complementary \emph{advantage-side correction}, raising the question: how do the two correction channels interact, and how should their strengths be coordinated?
    We derive exact bias and variance decompositions for a general two-channel actor update, revealing a nonseparable coupling between policy-weight and advantage-estimation errors: their interaction induces multiplicative bias terms, while squared policy weights amplify advantage uncertainty in gradient variance. This coupling motivates a testable hypothesis that the preferred settings of the two correction channels should be coordinated rather than chosen independently. Guided by this analysis, we introduce \textbf{C}oupled \textbf{O}ff-\textbf{P}olicy \textbf{C}orrection (\textbf{COPC}), an actor--critic method combining token-level ratio masking with a dedicated advantage-side correction based on two-sided clipped-ratio weighting of TD residuals for return and advantage estimation.
    Joint parameter sweeps across staleness levels provide empirical support for this hypothesis: the effect of varying one correction parameter depends on, and can even reverse with, the setting of the other. COPC achieves the highest reported performance on both tool-integrated mathematical reasoning and search, outperforming the strongest reported asynchronous baseline in each setting. COPC further exhibits strong practical robustness, with a broad high-performing parameter region and substantially improved training stability. In particular, in the search setting, it remains stable throughout training, while most evaluated asynchronous baselines exhibit late-stage performance collapse. These performance gains persist at 64-step policy staleness, while COPC incurs minimal step-time overhead over asynchronous PPO and retains a $1.7\times$ step-time speedup over synchronous PPO.
\end{abstract}

\input{sections/introduction}

\input{sections/releated_work}

\input{sections/preliminaries}

\input{sections/theory}

\input{sections/algorithm}

\input{sections/experiments}

\input{sections/conclusion}

\bibliographystyle{unsrtnat}
\bibliography{longcat/ref}

\clearpage

\appendix
\input{sections/appendix}

\end{document}

%% file: math_commands.tex
\usepackage{amsmath,amsfonts,bm}

\def\eqref#1{equation~\ref{#1}}

\def\1{\bm{1}}

\DeclareMathAlphabet{\mathsfit}{\encodingdefault}{\sfdefault}{m}{sl}
\SetMathAlphabet{\mathsfit}{bold}{\encodingdefault}{\sfdefault}{bx}{n}

\usepackage{amsmath}\allowdisplaybreaks
\usepackage{amsfonts,bm}
\usepackage{amssymb}
\usepackage{amsthm}
\usepackage{algorithm}
\usepackage{algorithmic}

\theoremstyle{plain}

\newtheorem{remark}{Remark}

\newtheorem{corollary}{Corollary}
\newtheorem{proposition}{Proposition}

\def\cD{{\mathcal{D}}}
\def\cE{{\mathcal{E}}}

\def\cL{{\mathcal{L}}}

\def\BE{{\mathbb{E}}}

\def\BI{{\mathbb{I}}}

%% file: sections/introduction.tex
\section{Introduction}
\label{sec:introduction}

Reinforcement learning (RL) has become a central paradigm for post-training large language models (LLMs) with preference feedback and verifiable rewards~\citep{ouyang2022training,shao2024deepseekmath,guo2025deepseek}. Scaling RL, however, is often limited by costly rollouts whose completion times vary with response length and tool interactions, leading to substantial idle time under synchronous training. Asynchronous RL alleviates this bottleneck by decoupling rollout generation from policy optimization, allowing both stages to proceed concurrently~\citep{wu2025llamarl,slime_github,fu2026areal}. This overlap improves hardware utilization but introduces policy staleness, as the learner updates on trajectories generated by earlier policy versions. Learning effectively under this behavior--learner mismatch is therefore central to realizing the benefits of asynchronous RL.

Policy lag can destabilize optimization as trajectories become increasingly off-policy relative to the current learner policy~\citep{xu2026gac,huang2026stable}. Recent asynchronous LLM-RL methods primarily mitigate this mismatch through token-level importance-ratio control in the actor objective, using clipping, masking, or related variance-control mechanisms~\citep{guan2026missing,zhao2026deconstructing}; we refer to these mechanisms as \emph{policy-side correction}. However, policy-side correction does not correct advantage estimation: the advantage still depends on continuations generated by the stale behavior policy. Hence, a token can receive an appropriate policy-side weight while its advantage remains tied to stale behavior-policy continuations. We term this \emph{advantage staleness}.

Advantage staleness motivates a complementary \emph{advantage-side correction} that acts directly on advantage estimation. This yields two distinct channels in the actor update: policy-side correction controls how off-policy samples are weighted, whereas advantage-side correction controls how their learning signals are estimated. This leads to a central question:
\par
{\centering
\emph{Do the two correction channels interact, and how should their strengths be coordinated?}
\par}
\noindent
We answer the first part by deriving exact bias and variance decompositions for a general two-channel actor update. The resulting expressions reveal a nonseparable interaction: policy-weight and advantage-estimation errors couple multiplicatively in the bias, while squared policy weights amplify conditional advantage uncertainty in the gradient variance. Thus, the two correction channels generally cannot be calibrated independently. Although the analysis does not prescribe an optimal coordination rule, it motivates a concrete, testable hypothesis: stronger policy-side mismatch control should favor weaker advantage-side correction, and vice versa.

Guided by this analysis, we introduce \textbf{C}oupled \textbf{O}ff-\textbf{P}olicy \textbf{C}orrection (\textbf{COPC}), an actor--critic method that instantiates the two channels with token-level ratio masking on the policy side and two-sided clipped-ratio weighting of TD residuals on the advantage side. The latter produces corrected returns for critic training and actor advantage estimation, while exposing a separate strength parameter that can be jointly calibrated with the policy-side correction in practice.

We test the coordination hypothesis through systematic two-dimensional parameter sweeps across multiple staleness levels. Figure~\ref{fig:coupled_sweeps} shows that preferred configurations follow the hypothesized coordination pattern, while the effect of varying one correction parameter can improve performance under one setting of the other but degrade it under another. This pattern persists across staleness levels, providing empirical support for both the hypothesized coordination and the cross-channel coupling predicted by our analysis. At the same time, strong performance spans a broad region of the parameter space, indicating robustness to precise hyperparameter choices. Given this robustness, we use the same pair of correction parameters across all tasks in our main experiments.

COPC achieves the highest reported performance on tool-integrated mathematical reasoning and search, measured by avg@32 and greedy-decoding accuracy, respectively, surpassing the strongest reported asynchronous baseline by 2.03 and 2.77 points. It substantially improves training stability: in the search setting, COPC remains stable throughout training, while most evaluated asynchronous baselines exhibit late-stage performance collapse. These gains persist at 64-step policy staleness, with minimal overhead over asynchronous PPO and a $1.7\times$ speedup over synchronous PPO.

Our contributions are fourfold:
\begin{itemize}
    \item \textbf{Advantage staleness and two-channel analysis.}
    We identify \emph{advantage staleness} as a distinct off-policy mismatch that persists after policy-side correction. Exact bias and variance decompositions for a general two-channel actor update reveal a nonseparable coupling between policy- and advantage-side errors, motivating a testable coordination hypothesis.

    \item \textbf{Coupled off-policy correction.}
    We introduce COPC, an actor--critic method combining token-level ratio masking with a new advantage-side correction based on two-sided clipped-ratio weighting of TD residuals. Corrected returns are used for critic training and actor advantage estimation, with separate channel parameters enabling joint calibration.

    \item \textbf{Empirical validation of coordinated correction.}
    Joint parameter sweeps across staleness levels support the coordination hypothesis and reveal pronounced cross-channel interactions. They also identify a broad high-performing parameter region, supporting a single shared setting across tasks without task-specific retuning in our experiments.

    \item \textbf{Performance, stability, and efficiency.}
    With the shared setting, COPC achieves the highest reported performance on both tool-integrated mathematical reasoning and search, while substantially improving training stability; in the search setting, it remains stable throughout training while most evaluated asynchronous baselines exhibit late-stage performance collapse. These gains persist at 64-step policy staleness, with minimal overhead over asynchronous PPO and a $1.7\times$ speedup over synchronous PPO across both evaluation settings.
\end{itemize}

%% file: sections/releated_work.tex
\section{Related Work}
\label{sec:related_work}

\paragraph{Asynchronous RL for LLM post-training.}
Asynchronous actor--learner architectures improve throughput by decoupling data collection from optimization, at the cost of behavior--learner policy mismatch~\citep{espeholt2018impala}. Recent systems extend this paradigm to LLM post-training through distributed rollout generation and asynchronous policy updates, including asynchronous RLHF, AReaL, and LlamaRL~\citep{noukhovitch2025asynchronous,fu2026areal,wu2025llamarl}. Our work focuses on how to reliably correct learner updates under the resulting policy staleness.

\paragraph{Policy-side stabilization in off-policy LLM optimization.}
Recent LLM-RL methods mitigate behavior--learner mismatch through importance-ratio control and related stabilization mechanisms. AReaL separates behavior-policy correction from proximal policy constraints, while SAO applies two-sided token-level ratio masking to suppress updates under large policy mismatch~\citep{fu2026areal,hou2026single}. Other approaches target variance under policy staleness: VCPO combines effective-sample-size-based learning-rate adaptation with a minimum-variance baseline, whereas M2PO constrains the second moment of importance weights to limit high-variance updates~\citep{huang2026stable,zheng2025prosperity}. These methods primarily stabilize off-policy actor updates, whereas COPC additionally corrects continuation-dependent returns and advantages.

\paragraph{Off-policy return and advantage correction.}
Classical off-policy RL methods correct multi-step return estimation through importance weighting or truncation. Retrace constructs off-policy return targets using truncated importance ratios, while V-trace uses corrected returns for both critic learning and actor advantage estimation~\citep{munos2016safe,espeholt2018impala}. Reactor studies how truncation and value-estimation error affect off-policy actor--critic updates, and C-trace characterizes trade-offs among update variance, fixed-point bias, and contraction rate~\citep{gruslys2017reactor,rowland2020adaptive}. COPC differs in both correction design and scope: it introduces an advantage-side correction based on two-sided clipped-ratio weighting of TD residuals, with a separate parameter from policy-side masking, and studies the interaction between the two correction channels.

%% file: sections/preliminaries.tex
\section{Preliminaries and Motivations}
\label{sec:preliminaries}

\subsection{Asynchronous RL Training}

We model an LLM as an autoregressive policy $\pi_\theta$ generating a response $y=(y_1,\ldots,y_{|y|})$ to a prompt $x\sim\mathcal D$, where $s_t$ denotes the context preceding $y_t$. In asynchronous RL, the current \emph{learner policy} $\pi_\theta$ is updated using responses generated by a stale \emph{behavior policy} $\mu$. PPO-style optimization then combines importance ratios with advantage estimates in a clipped surrogate objective:
\begin{equation}
\label{eq:ppo_surrogate}
    \mathcal J(\theta)=\mathbb E_{x\sim\mathcal D,\,y\sim\mu(\cdot\mid x)}\!\bigg[\frac{1}{|y|}\sum_{t=1}^{|y|}\min\!\left(\rho_t(\theta)\hat A_t,\operatorname{clip}\!\left(\rho_t(\theta),1-\epsilon,1+\epsilon\right)\hat A_t\right)\bigg].
\end{equation}
Here, the token-level ratio
$\rho_t(\theta)=\frac{\pi_\theta(y_t\mid s_t)}{\mu(y_t\mid s_t)}$
measures learner--behavior policy mismatch at token $t$, $\hat A_t$ denotes the advantage estimate used by the actor, and $\epsilon$ controls the clipping range.

\paragraph{Proximal Policy Optimization (PPO).}
PPO uses a learned critic $V_\phi(s_t)$ to estimate state values. Letting $V_t=V_\phi(s_t)$, generalized advantage estimation (GAE)~\citep{schulman2015high} constructs advantages from temporal-difference (TD) residuals:
\begin{equation}
\label{eq:gae}
    \hat A_t^{\mathrm{GAE}}
    =
    \sum_{j=t}^{|y|}
    (\gamma\lambda)^{j-t}\delta_j,
    \qquad
    \delta_j
    =
    r_j+\gamma V_{j+1}-V_j.
\end{equation}
Here, $r_j$ denotes the reward at position $j$, $\gamma$ is the discount factor, and $\lambda$ controls trace decay and the associated bias--variance trade-off. We set the terminal value to $V_{|y|+1}=0$.

\subsection{Why Advantage-Side Correction Is Needed}
To isolate the role of advantage-side correction, we reweight TD residuals along behavior-policy trajectories using two-sided clipped importance ratios. Let
$\bar\rho_j=\operatorname{clip}\bigl(\rho_j(\theta),1-\epsilon_\ell,1+\epsilon_h\bigr)$
denote the clipped ratio. We define the corrected return and induced advantage as
\begin{equation}
\label{eq:advantage_side_correction}
    \hat v_t=V_t+\sum_{j=t}^{|y|}(\gamma\lambda)^{j-t}\bar\rho_j\delta_j,\qquad
    \hat A_t=\delta_t+\sum_{j=t+1}^{|y|}(\gamma\lambda)^{j-t}\bar\rho_j\delta_j.
\end{equation}
Setting $\bar\rho_j=1$ recovers standard GAE in~\eqref{eq:gae}, while the corresponding expected return operator has fixed point $V^\mu$. Setting $\bar\rho_j=\rho_j(\theta)$ instead applies the exact, unclipped importance ratio and shifts the fixed point to $V^{\pi_\theta}$, correcting the mismatch induced by stale behavior-policy continuations but potentially increasing estimator variance. Two-sided clipping bounds extreme ratios and controls this variance, at the cost of shifting the fixed point away from $V^{\pi_\theta}$. The following proposition characterizes the resulting effective policy; its proof is provided in Appendix~\ref{app:return_fixed_point}.
\begin{proposition}[Unique fixed point]
\label{prop:return_fixed_point}
Under the finite-horizon and support assumptions, the expected corrected-return operator associated with~\eqref{eq:advantage_side_correction} has the unique fixed point $V^{\widetilde\pi}$, where
\begin{equation}
\label{eq:effective_correction_policy}
    \widetilde\pi(a\mid s)=\frac{\mu(a\mid s)\bar\rho(s,a)}{\sum_b\mu(b\mid s)\bar\rho(s,b)}.
\end{equation}
\end{proposition}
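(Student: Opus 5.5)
The plan is to write down the expected corrected-return operator explicitly, reduce its fixed-point equation to a per-state condition on the clipped-ratio-weighted TD residual, and then identify that condition as the Bellman equation of $\widetilde\pi$. For a value function $V$, define
\[
(\mathcal R V)(s_t)=V(s_t)+\BE_{\mu}\Bigl[\textstyle\sum_{j=t}^{|y|}(\gamma\lambda)^{j-t}\bar\rho_j\,\delta_j^V \,\Big|\, s_t\Bigr],\qquad \delta_j^V=r_j+\gamma V(s_{j+1})-V(s_j),
\]
with the terminal convention $V(s_{|y|+1})=0$. The expectation is over continuations drawn from $\mu$. In the LLM setting the transition $s_{j+1}=(s_j,y_j)$ is deterministic, but the argument only uses the Markov property. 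By the tower property, $\BE_\mu[\bar\rho_j\delta^V_j\mid s_t]=\BE_\mu\bigl[g_V(s_j)\mid s_t\bigr]$, where
\[
g_V(s)=\sum_a\mu(a\mid s)\bar\rho(s,a)\bigl(r(s,a)+\gamma V(s')-V(s)\bigr).
\]
Hence $V=\mathcal R V$ if and only if $\sum_{j\ge t}(\gamma\lambda)^{j-t}\BE_\mu[g_V(s_j)\mid s_t]=0$ for every reachable $s_t$.

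The key step, which I expect to be the main obstacle, is to show that this aggregated condition forces $g_V(s)=0$ at every reachable state. Here the finite-horizon assumption does the work, through backward induction on the remaining horizon.
\begin{itemize}
\item At a state with one remaining step, the sum has a single term, so the condition reads $g_V(s)=0$.
\item Suppose $g_V=0$ at all reachable states with fewer than $k$ remaining steps. For a state with $k$ remaining steps, every term with $j>t$ vanishes, so the condition again reduces to $g_V(s_t)=0$.
\end{itemize}
The converse is immediate: if $g_V\equiv0$, every expected term vanishes and $V=\mathcal R V$. Some care is needed about which states are reachable under $\mu$. The support assumption, that $\mu(a\mid s)>0$ wherever needed, ensures the condition is imposed on every state relevant to $V^{\widetilde\pi}$.

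Next, rewrite $g_V(s)=0$ as a Bellman equation. Since $\bar\rho\ge1-\epsilon_\ell>0$, the support assumption gives $Z(s)=\sum_b\mu(b\mid s)\bar\rho(s,b)>0$. Dividing by $Z(s)$ turns $g_V(s)=0$ into
\[
V(s)=\sum_a\widetilde\pi(a\mid s)\bigl(r(s,a)+\gamma V(s')\bigr),
\]
which is exactly the Bellman evaluation equation of $\widetilde\pi$ defined in~\eqref{eq:effective_correction_policy}.

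Finally, on a finite horizon with terminal value $0$, the Bellman evaluation equation has a unique solution, obtained by backward recursion from the terminal step, namely $V^{\widetilde\pi}$. Combining the three steps, $\mathcal R$ has exactly one fixed point, and it is $V^{\widetilde\pi}$. As sanity checks, $\bar\rho\equiv1$ gives $\widetilde\pi=\mu$, and the unclipped choice $\bar\rho=\rho$ gives $\widetilde\pi=\pi_\theta$ (since then $Z(s)=1$), matching the remarks preceding the proposition.
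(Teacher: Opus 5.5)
Your argument is correct, but it is organized differently from the paper's proof.

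\textbf{The paper's route.} The paper uses the affine structure of $\mathcal R$. It takes two fixed points, sets $U=V_1-V_2$, and runs backward induction over the time index on the resulting homogeneous equation. Each step collapses to $-\kappa(s)U(s)=0$. Existence is handled separately: the paper checks that at $V^{\widetilde\pi}$ each $\bar\rho$-weighted TD residual has zero conditional mean (this is the $\widetilde\pi$-Bellman equation rescaled by $\kappa(s)$), then applies the tower property.

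\textbf{Your route.} You run the backward induction on the per-state residual $g_V$ itself. This gives a characterization: $\mathcal RV=V$ if and only if $g_V\equiv0$, if and only if $V$ solves the $\widetilde\pi$-Bellman equation. Existence and uniqueness then both reduce to standard finite-horizon policy evaluation.

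\textbf{What each buys.} Your route identifies the entire fixed-point set at once and explains where $\widetilde\pi$ comes from, rather than guessing and verifying it. The paper's route isolates the fact that the linear part of $\mathcal R$ is triangular with diagonal $1-\kappa(s)$. That is exactly the structure reused in Corollary~\ref{cor:return_iteration_convergence}.

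\textbf{Two small repairs.}

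First, the appendix allows $\epsilon_\ell\in[0,1]$, so the lower clip can be $0$. Your bound $\bar\rho\ge1-\epsilon_\ell>0$ therefore fails at $\epsilon_\ell=1$. In that case positivity of $Z(s)$ must come from the support assumption, as in the paper: any $a$ with $\pi_\theta(a\mid s)>0$ has $\mu(a\mid s)>0$ and hence $\bar\rho(s,a)>0$. When $\epsilon_\ell<1$ you do not need the support assumption for this step at all, since $Z(s)\ge1-\epsilon_\ell$ directly.

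Second, your reachability caveat is unnecessary, and it misstates the role of the support assumption.
\begin{itemize}
\item $(\mathcal RV)(s)$ is defined at every time-indexed state by running $\mu$ from $s$. So the fixed-point condition holds at every state, not only at reachable ones.
\item Inducting on the time index, i.e.\ on $H-t$, yields $g_V\equiv0$ on all states. Do not phrase it as ``remaining steps'': the actual number of remaining tokens is random because $|y|$ is.
\item The fact that states relevant to $V^{\widetilde\pi}$ are $\mu$-reachable follows from $\operatorname{supp}\widetilde\pi\subseteq\operatorname{supp}\mu$. That inclusion holds by construction, not because of the support assumption.
\end{itemize}
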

Proposition~\ref{prop:return_fixed_point} makes the source of advantage staleness explicit. Without advantage-side correction, $\bar\rho(s,a)=1$ and hence $\widetilde\pi=\mu$, so the return target and induced advantage remain tied to stale behavior-policy continuations rather than the learner policy $\pi_\theta$. Thus, policy-side correction alone cannot eliminate advantage staleness. Consistent with this prediction, COPC w/o RC replaces the corrected returns and advantages with standard GAE while keeping other settings fixed, and underperforms COPC on both tool-integrated mathematical reasoning and search (Tables~\ref{tab:retool_main_results} and~\ref{tab:searchr1_main_results}).

\subsection{Two-Channel Correction Setup}
We formalize the two correction channels through an idealized reference actor update. Let $A_t^{\pi_\theta}$ denote the advantage under current-policy continuations and $\psi_t(\theta)=\nabla_\theta\log\pi_\theta(y_t\mid s_t)$ the token-level score function. Using the raw importance ratio and current-policy advantage, we define
\begin{equation}
    \label{eq:reference_gradient}
    g^\star(\theta)=\mathbb E_{x\sim\mathcal D,y\sim\mu(\cdot\mid x)}\!\bigg[\frac{1}{|y|}\sum_{t=1}^{|y|}\rho_t(\theta)\psi_t(\theta)A_t^{\pi_\theta}\bigg].
\end{equation}
This reference isolates the two quantities that must be approximated or controlled in practice: the policy weight $\rho_t(\theta)$ and the continuation-dependent advantage $A_t^{\pi_\theta}$. We parameterize their respective corrections by $\eta_P$ and $\eta_A$ to study their interaction in the actor update.

\paragraph{Policy-side correction.}
Policy-side correction replaces the raw importance ratio $\rho_t(\theta)$ with a controlled weight $w_t(\eta_P)$. For example, SAO~\citep{hou2026single} uses ratio masking:
\begin{equation*}
    w_t(\eta_P)=\rho_t(\theta)\,\BI\!\left\{1-\epsilon_\ell^P\le\rho_t(\theta)\le1+\epsilon_h^P\right\}.
\end{equation*}
Here, $\eta_P=(\epsilon_\ell^P,\epsilon_h^P)$ defines the masking interval: $w_t=\rho_t(\theta)$ inside and $w_t=0$ outside.

\paragraph{Advantage-side correction.}
Advantage-side correction replaces the ideal current-policy advantage $A_t^{\pi_\theta}$ with an estimate $\hat A_t(\eta_A)$ that accounts for mismatch from stale behavior-policy continuations, where $\eta_A$ parameterizes the correction. For example, in~\eqref{eq:advantage_side_correction}, $\eta_A=(\epsilon_\ell^A,\epsilon_h^A)$ specifies the clipping interval for importance ratios used to weight TD residuals.

Combining the two correction channels yields the expected length-normalized actor update
\begin{equation}
\label{eq:expected_two_sided_gradient}
    g(\eta_P,\eta_A)=\BE_{x\sim\cD,\,y\sim\mu(\cdot\mid x)}\!\bigg[\frac{1}{|y|}\sum_{t=1}^{|y|}w_t(\eta_P)\psi_t(\theta)\hat A_t(\eta_A)\bigg].
\end{equation}

%% file: sections/theory.tex
\section{Coupling Between Correction Channels}
\label{sec:theory}

The preceding fixed-point analysis characterizes the effective value target induced by a concrete advantage-side correction. We now study, more generally, how advantage-side correction interacts with policy-side ratio control in the actor update. We derive exact bias and variance decompositions relative to the reference update $g^\star(\theta)$ to characterize how errors from the two correction channels interact. The analysis treats $w_t(\eta_P)$ and $\hat A_t(\eta_A)$ abstractly under the conditions below, without assuming a particular correction mechanism. All quantities are evaluated at a fixed training iterate, with all required moments assumed finite. Proofs are provided in Appendix~\ref{proof:couple_bias_variance}.

\subsection{Bias Coupling}
For channel parameters $(\eta_P,\eta_A)$, define the policy-weight and advantage-estimation errors as
\begin{equation*}
    \Delta_t(\eta_P)=w_t(\eta_P)-\rho_t(\theta),\qquad
    e_t(\eta_A)=\hat A_t(\eta_A)-A_t^{\pi_\theta}.
\end{equation*}
\begin{proposition}[Bias coupling]
\label{prop:bias_decomposition}
The update bias relative to the reference satisfies
\begin{equation}
\label{eq:exact_bias_decomposition}
    g(\eta_P,\eta_A)-g^\star(\theta)
    =\BE\bigg[\frac{1}{|y|}\sum_{t=1}^{|y|}
    \psi_t(\theta)\Big(
    \underbrace{\Delta_t(\eta_P)A_t^{\pi_\theta}}_{\text{policy-side}}
    +\underbrace{\rho_t(\theta)e_t(\eta_A)}_{\text{advantage-side}}
    +\underbrace{\Delta_t(\eta_P)e_t(\eta_A)}_{\text{interaction}}
    \Big)\bigg].
\end{equation}
\end{proposition}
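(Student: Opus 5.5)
The identity in Proposition~\ref{prop:bias_decomposition} is purely algebraic, so the plan is to prove it pointwise, at the level of each token inside the expectation, and then pass to expectations by linearity. I will start from the definitions of the errors, $w_t(\eta_P)=\rho_t(\theta)+\Delta_t(\eta_P)$ and $\hat A_t(\eta_A)=A_t^{\pi_\theta}+e_t(\eta_A)$, and substitute them into the integrand of the two-channel update~\eqref{eq:expected_two_sided_gradient}.

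Expanding the product gives, for each fixed $(x,y)$ and token $t$,
\begin{equation*}
w_t\hat A_t=\rho_tA_t^{\pi_\theta}+\Delta_tA_t^{\pi_\theta}+\rho_te_t+\Delta_te_t.
\end{equation*}
This is a four-term bilinear expansion. Multiplying by $\psi_t(\theta)$ and averaging with the length normalization $\frac{1}{|y|}\sum_{t=1}^{|y|}$ preserves the identity sample by sample. The first term, $\rho_t\psi_tA_t^{\pi_\theta}$, is exactly the integrand of the reference update $g^\star(\theta)$ in~\eqref{eq:reference_gradient}. Both expectations are taken under the same law $x\sim\mathcal D$, $y\sim\mu(\cdot\mid x)$. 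It therefore suffices to take expectations on both sides and subtract $g^\star(\theta)$; the remaining three terms are precisely the policy-side, advantage-side and interaction terms of~\eqref{eq:exact_bias_decomposition}.

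The only step needing care is justifying the split of the expectation of a sum into a sum of expectations, that is, integrability of each of the four pieces separately. I would handle this with the standing assumption that all required moments are finite. That assumption is made at the start of Section~\ref{sec:theory}, with $\theta$ fixed so that $\rho_t,\psi_t,A_t^{\pi_\theta}$ are fixed measurable functions of the trajectory. Under finite second moments, Cauchy--Schwarz bounds each product, for example $\BE|\Delta_t\psi_te_t|$. The random length $|y|$ is finite under the finite-horizon assumption, so the inner sum is a finite sum of integrable terms and linearity applies.

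No step is a genuine obstacle. The nearest one is being explicit that $e_t$ may depend on the whole trajectory, including future tokens, so no conditioning or tower-property argument is used. The decomposition holds without any independence between the two channels, and this is what makes the interaction term appear rather than vanish.
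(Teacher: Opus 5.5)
Your proposal is correct and follows the paper's proof: the paper likewise proves the samplewise identity $w_t\hat A_t-\rho_tA_t^{\pi_\theta}=\Delta_tA_t^{\pi_\theta}+\rho_te_t+\Delta_te_t$, multiplies by $\psi_t(\theta)$, applies the length normalization, and takes expectations under the common sampling law, noting as you do that no independence between the channels is needed. The only difference is minor: the paper assumes absolute integrability of each term directly rather than deriving it via Cauchy--Schwarz. For the triple product $\Delta_t\psi_te_t$, your Cauchy--Schwarz route would in any case need a second moment of a product, such as $\BE[\Delta_t^2\|\psi_t\|_2^2]<\infty$.
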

The first two terms capture the direct contributions of policy-weight and advantage-estimation errors. The interaction term $\Delta_t(\eta_P)e_t(\eta_A)$ reveals their multiplicative coupling: its expected contribution can reinforce or offset the direct terms. Consequently, the effect of adjusting either correction channel on the overall update bias generally depends on the residual error in the other.

\subsection{Variance Coupling}
For a random vector $X$, let $\mathbb V[X]=\BE[|X-\BE[X]|_2^2]$ denote its total variance. Define the token-level gradient contribution as
\begin{equation*}
    Z_t=\frac{w_t(\eta_P)\psi_t(\theta)\hat A_t(\eta_A)}{|y|},
\end{equation*}
with $Z_t=0$ beyond the response length. Let $C_t$ denote conditioning information under which $|y|$, $\psi_t(\theta)$, and $w_t(\eta_P)$ are measurable for every policy-side setting $\eta_P$. Using the same conditioning information across correction settings, we define the conditional moments
\begin{equation*}
    m_t(\eta_A)=\BE[\hat A_t(\eta_A)\mid C_t],\qquad
    \sigma_t^2(\eta_A)=\operatorname{Var}[\hat A_t(\eta_A)\mid C_t].
\end{equation*}
\begin{proposition}[Variance coupling]
\label{prop:variance_decomposition}
The variance of each token contribution decomposes as
\begin{equation}
\label{eq:variance_decomposition}
    \mathbb V[Z_t]
    =\mathbb V\!\left[\frac{w_t(\eta_P)\psi_t(\theta)m_t(\eta_A)}{|y|}\right]
    +\BE\!\left[\frac{w_t^2(\eta_P)\|\psi_t(\theta)\|_2^2\sigma_t^2(\eta_A)}{|y|^2}\right].
\end{equation}
\end{proposition}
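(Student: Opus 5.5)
The plan is to apply the law of total variance for random vectors, conditioning on the information $C_t$. For a square-integrable random vector $X$ and a $\sigma$-field $\mathcal G$, the total variance splits as
\begin{equation*}
\mathbb V[X]=\mathbb V\big[\BE[X\mid\mathcal G]\big]+\BE\Big[\BE\big[\|X-\BE[X\mid\mathcal G]\|_2^2\mid\mathcal G\big]\Big].
\end{equation*}
To see this, write $X-\BE[X]=(X-\BE[X\mid\mathcal G])+(\BE[X\mid\mathcal G]-\BE[X])$ and expand the squared norm. The cross term $\BE\big[(X-\BE[X\mid\mathcal G])^\top(\BE[X\mid\mathcal G]-\BE[X])\big]$ vanishes by the tower property. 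This holds because the second factor is $\mathcal G$-measurable and the first factor has zero conditional mean. I would state this as a one-line lemma with $\mathcal G=\sigma(C_t)$.

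Next I would compute the conditional mean of $Z_t=w_t\psi_t\hat A_t/|y|$. By assumption $|y|$, $\psi_t(\theta)$ and $w_t(\eta_P)$ are $C_t$-measurable, so the vector prefactor $w_t\psi_t/|y|$ can be pulled out of the conditional expectation. This gives $\BE[Z_t\mid C_t]=w_t\psi_t m_t(\eta_A)/|y|$, and the first term of the lemma is exactly the first term of~\eqref{eq:variance_decomposition}.

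For the second term, note that $Z_t-\BE[Z_t\mid C_t]=\frac{w_t\psi_t}{|y|}\big(\hat A_t-m_t\big)$, which is a $C_t$-measurable vector times a scalar. Its squared norm is therefore $\frac{w_t^2\|\psi_t\|_2^2}{|y|^2}(\hat A_t-m_t)^2$. Taking the conditional expectation and again pulling out the measurable factor yields $\frac{w_t^2\|\psi_t\|_2^2}{|y|^2}\sigma_t^2(\eta_A)$. Taking the outer expectation then gives the second term of~\eqref{eq:variance_decomposition}.

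The main technical care is the convention $Z_t=0$ for $t>|y|$, since the token index may exceed the random length. I would handle this by including the event $\{t\le|y|\}$ in $C_t$, which is possible because $|y|$ is $C_t$-measurable. On the complement I would set $w_t=0$ (equivalently, multiply by $\mathbb I\{t\le|y|\}$), so both sides vanish there. Integrability of the cross terms needed to justify the tower steps follows from the standing assumption that all required moments are finite.
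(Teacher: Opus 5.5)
Your proposal is correct and follows essentially the same route as the paper: the vector law of total variance conditioned on $C_t$, with the $C_t$-measurable prefactor $w_t\psi_t/|y|$ pulled out of both the conditional mean and the conditional second moment, plus a zero-padding convention for $t>|y|$. The only cosmetic difference is that the paper zeroes all of $w_t$, $\psi_t$, $\hat A_t$ at padded positions, so $m_t$ and $\sigma_t^2$ vanish there; you zero only $w_t$, which suffices because every term carries that factor.
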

The first term captures conditional-mean variation. The second exposes coupling through $w_t^2(\eta_P)\sigma_t^2(\eta_A)$: squared policy weights rescale advantage uncertainty in gradient variance. The full-response decomposition, including cross-token covariances, is given in Appendix~\ref{app:trajectory_variance}.

\subsection{Joint Gradient-Estimation Error}
Consider $N$ prompt groups, each containing $K$ dependent responses. Let $Z_{i,k}$ denote the gradient contribution of response $k$ to prompt $i$, and define group average and finite-sample update as
\begin{equation*}
    \bar Z_{i,K}=\frac{1}{K}\sum_{k=1}^{K}Z_{i,k},\qquad
    \hat g_{N,K}=\frac{1}{N}\sum_{i=1}^{N}\bar Z_{i,K}.
\end{equation*}
We assume the group averages are independent and identically distributed while allowing dependence among responses within each group. Let $\bar Z_K$ denote a generic group average, with $\BE[\bar Z_K]=g(\eta_P,\eta_A)$ under the sampling and advantage-estimation scheme defining $g$.

\begin{corollary}[Joint gradient error]
\label{cor:joint_gradient_error}
For fixed $(\eta_P,\eta_A)$, the update has mean-squared error as follows
\begin{equation}
\label{eq:joint_gradient_mse}
    \mathcal E(\eta_P,\eta_A)
    :=\BE\!\left[\|\hat g_{N,K}-g^\star(\theta)\|_2^2\right]
    =\|g(\eta_P,\eta_A)-g^\star(\theta)\|_2^2
    +\frac{1}{N}\mathbb V[\bar Z_K].
\end{equation}
\end{corollary}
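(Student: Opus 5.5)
The plan is a standard bias--variance decomposition of the mean-squared error of $\hat g_{N,K}$ around the deterministic target $g^\star(\theta)$. At a fixed training iterate, both $g^\star(\theta)$ and $g(\eta_P,\eta_A)$ are nonrandom vectors. I would write
\[
\hat g_{N,K}-g^\star(\theta)=\bigl(\hat g_{N,K}-g(\eta_P,\eta_A)\bigr)+\bigl(g(\eta_P,\eta_A)-g^\star(\theta)\bigr),
\]
expand the squared Euclidean norm, and take expectations.

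\textbf{Step 1 (unbiasedness).} By linearity,
\[
\BE[\hat g_{N,K}]=\frac1N\sum_{i=1}^{N}\BE[\bar Z_{i,K}]=\BE[\bar Z_K]=g(\eta_P,\eta_A),
\]
using that the group averages are identically distributed with the stated mean. Consequently, the cross term
\[
2\bigl\langle \BE[\hat g_{N,K}]-g(\eta_P,\eta_A),\; g(\eta_P,\eta_A)-g^\star(\theta)\bigr\rangle
\]
vanishes. The deterministic term is exactly $\|g(\eta_P,\eta_A)-g^\star(\theta)\|_2^2$, which is the squared bias already characterized in Proposition~\ref{prop:bias_decomposition}.

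\textbf{Step 2 (variance of the average).} It remains to show
\[
\BE\bigl[\|\hat g_{N,K}-g(\eta_P,\eta_A)\|_2^2\bigr]=\frac1N\mathbb V[\bar Z_K].
\]
Write $\hat g_{N,K}-g=\frac1N\sum_i U_i$, where $U_i=\bar Z_{i,K}-g$ are i.i.d.\ mean-zero random vectors. Expanding $\|\sum_i U_i\|_2^2=\sum_{i,i'}\langle U_i,U_{i'}\rangle$, each off-diagonal term has expectation $\langle \BE U_i,\BE U_{i'}\rangle=0$ by independence, since the inner product is bilinear and the moments are finite. The $N$ diagonal terms each equal $\BE\|U_i\|_2^2=\mathbb V[\bar Z_K]$. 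This gives $N\,\mathbb V[\bar Z_K]/N^2$.

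Within-group dependence among the $K$ responses never has to be unpacked, because it is absorbed into $\mathbb V[\bar Z_K]$. This is exactly why the statement is phrased at the level of group averages. If desired, $\mathbb V[\bar Z_K]$ can then be expanded further via Proposition~\ref{prop:variance_decomposition} and the trajectory-level covariances.

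\textbf{Main obstacle.} There is no real technical difficulty. The delicate point is bookkeeping: the mean of $\bar Z_K$ must be exactly $g(\eta_P,\eta_A)$, with the advantage estimates, including any dependence on critic values or group statistics, counted as part of the sampling scheme. Finite second moments must also be assumed, which justifies exchanging expectation with the finite sums and the independence factorization of the cross terms.
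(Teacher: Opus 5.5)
Your proposal is correct and follows the paper's proof: you split $\hat g_{N,K}-g^\star(\theta)=(\hat g_{N,K}-g(\eta_P,\eta_A))+(g(\eta_P,\eta_A)-g^\star(\theta))$, the cross term vanishes by unbiasedness, and the first term reduces to $\frac{1}{N}\mathbb V[\bar Z_K]$ by independence across prompt groups, with within-group dependence absorbed into $\bar Z_K$. You also show explicitly that the off-diagonal terms satisfy $\BE\langle U_i,U_{i'}\rangle=0$, a step the paper skips by going straight to the diagonal sum.
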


\begin{remark}[Implications for joint calibration]
\label{rem:joint_calibration}
The preceding decompositions expose cross-channel interactions in both update bias and variance through $\Delta_t e_t$ and $w_t^2\sigma_t^2$. Consequently, the gradient-estimation error in~\eqref{eq:joint_gradient_mse} need not be separable in $\eta_P$ and $\eta_A$ at a fixed training iterate, motivating joint rather than independent calibration of the two correction channels.
\end{remark}

\paragraph{Coordination hypothesis.}
This nonseparable coupling motivates a concrete directional hypothesis: stronger correction in one channel should favor weaker correction in the other, and vice versa. We test this hypothesis through two-dimensional parameter sweeps across multiple staleness levels. Under the COPC parameterization in Section~\ref{sec:algorithm}, smaller values of $\eta_P$ and $\eta_A$ correspond to stronger correction in their respective channels. As shown in Figure~\ref{fig:coupled_sweeps}, the sweeps are consistent with the hypothesized coordination and reveal pronounced cross-channel interactions: changing one correction parameter can improve performance under one setting of the other yet degrade it under another.

\begin{figure}
    \centering
    \includegraphics[width=\linewidth]{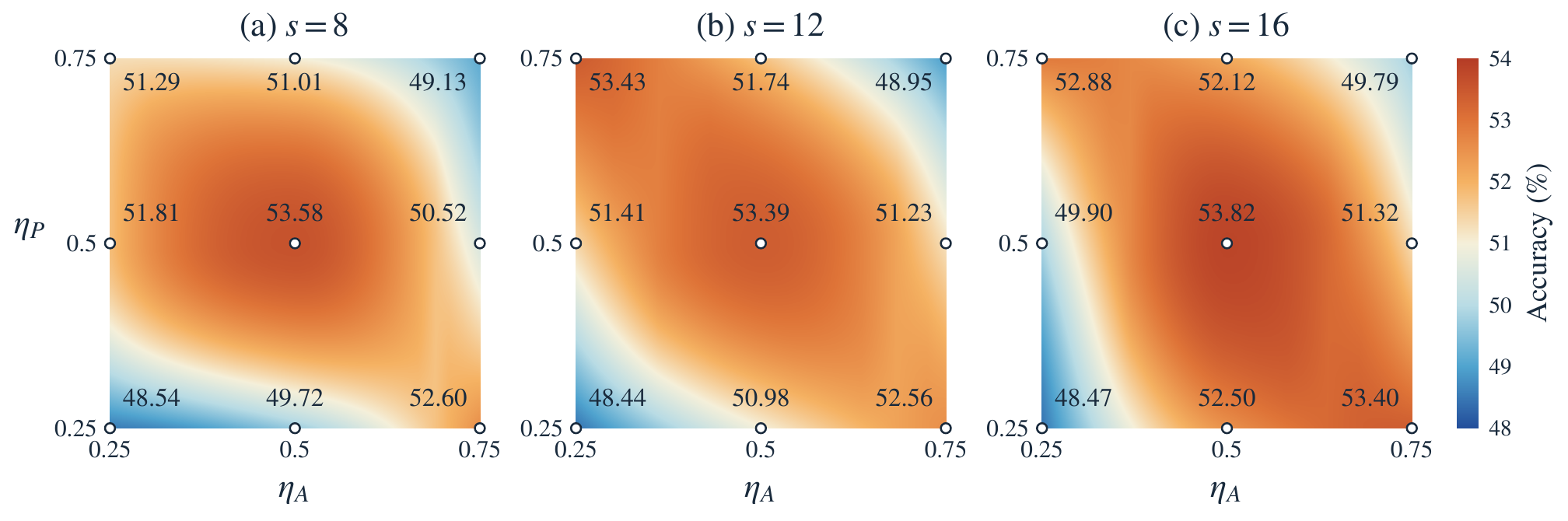}
    \caption{Mean avg@32 accuracy on AIME 2025 under joint policy- and advantage-side correction sweeps at staleness levels $s\in\{8,12,16\}$.}
    \label{fig:coupled_sweeps}
\end{figure}

%% file: sections/algorithm.tex
\section{COPC: Coupled Off-Policy Correction}
\label{sec:algorithm}

We introduce COPC, an actor--critic method that combines our proposed advantage-side correction in~\eqref{eq:advantage_side_correction} with the policy-side masking rule of SAO~\citep{hou2026single}. The corrected returns serve as critic targets and construct actor advantages, while masking controls how these learning signals contribute to the policy update. Motivated by the coupling analysis in Section~\ref{sec:theory}, COPC treats the two correction strengths as jointly selected hyperparameters.

\paragraph{Joint parameterization.}
We parameterize the policy- and advantage-side correction intervals using fixed base margins $\epsilon_\ell,\epsilon_h>0$ and separate interval controls $\eta_P>0$ and $\eta_A\ge0$:
\begin{equation*}
    \ell_X=1-\eta_X\epsilon_\ell,\qquad
    u_X=1+\eta_X\epsilon_h,\qquad
    X\in\{P,A\}.
\end{equation*}
Here, correction strength refers to the restrictiveness of the ratio constraint: smaller $\eta_X$ narrows the interval and represents stronger correction under this convention. The bounds $(\ell_P,u_P)$ control policy-side masking, while $(\ell_A,u_A)$ control advantage-side ratio clipping. Thus, smaller $\eta_P$ yields stricter masking, whereas smaller $\eta_A$ clips TD-residual importance weights more tightly around $1$. We select $(\eta_P,\eta_A)$ jointly to account for their coupled effect on the actor update.

\paragraph{Actor--critic update.}
For a rollout batch $\mathcal B$, before advantage normalization, the actor objective is
\begin{equation}
\label{eq:copc_actor_loss}
    \cL_{\mathrm{actor}}(\theta)=-\frac{1}{|\mathcal B|}\sum_{(x,y)\in\mathcal B}\frac{1}{|y|}\sum_{t=1}^{|y|}\rho_t(\theta)\,\BI\!\left\{\ell_P<\rho_t(\theta)<u_P\right\}\operatorname{sg}\!\left[\hat A_t(\eta_A)\right].
\end{equation}
Here, $\operatorname{sg}[\cdot]$ denotes stop-gradient. The indicator is treated as a fixed, non-differentiable mask: masked tokens contribute zero actor gradient, whereas retained tokens preserve the gradient path through the importance-ratio term $\rho_t(\theta)$, without differentiating through the masking decision.

The critic learns from the corrected returns via clipped value regression. Let $V_\phi(s_t)$ denote the current value prediction and $V_t$ the frozen pre-update prediction. With threshold $\epsilon_V>0$, define
\begin{equation*}
    V_t^{\mathrm{clip}}(\phi)=V_t+\operatorname{clip}\!\left(V_\phi(s_t)-V_t,-\epsilon_V,\epsilon_V\right).
\end{equation*}
The critic is optimized with the clipped regression objective
\begin{equation}
\label{eq:copc_critic_loss}
    \cL_{\mathrm{critic}}(\phi)=\frac{1}{|\mathcal B|}\sum_{(x,y)\in\mathcal B}\frac{1}{|y|}\sum_{t=1}^{|y|}\max\!\left\{\left(V_\phi(s_t)-\operatorname{sg}[\hat v_t]\right)^2,\left(V_t^{\mathrm{clip}}(\phi)-\operatorname{sg}[\hat v_t]\right)^2\right\}.
\end{equation}
Return targets and actor advantages are computed from frozen pre-update quantities and held fixed during optimization. Policy-side masking acts only on the actor gradient: masked tokens contribute to return estimation and critic training, and their TD residuals can influence earlier advantages.

%% file: sections/experiments.tex
\section{experiments}
\label{sec:experiments}

\begin{table*}[t]
\centering
\caption{Performance on tool-integrated mathematical reasoning measured by avg@32. Best in \textbf{bold} and second-best \underline{underlined} within each backbone group. $\Delta$ rows report gains of COPC over PPO.}
\vspace{3pt}
\label{tab:retool_main_results}
\setlength{\tabcolsep}{3pt}
    \begin{tabular*}{\textwidth}{@{\extracolsep{\fill}}lcccccc@{}}
    \toprule
    \textbf{Method} & \textbf{AIME24} & \textbf{AIME25} & \textbf{AIME26} & \textbf{BRUMO25} & \textbf{BeyondAIME} & \textbf{Avg.} \\
    \midrule
    \multicolumn{7}{@{}l}{\textit{Qwen3-4B-Instruct}} \\
    \addlinespace
    PPO & 54.65 & 43.75 & 47.40 & 44.83 & 27.53 & 43.63 \\
    V-trace & 55.07 & 46.28 & 50.17 & 46.42 & 28.80 & 45.35 \\
    DAPO & 56.42 & 48.99 & 51.11 & 46.08 & 29.17 & 46.35 \\
    SAO & 58.06 & 50.31 & 53.13 & \underline{47.29} & 30.48 & 47.85 \\
    AReaL & \underline{58.40} & \underline{51.25} & \underline{53.47} & 47.01 & \underline{31.19} & \underline{48.26} \\
    \midrule
    COPC w/o RC & 58.33 & 50.83 & 53.33 & 47.22 & 30.82 & 48.11 \\
    \textbf{COPC (ours)} & \textbf{60.73} & \textbf{53.58} & \textbf{55.63} & \textbf{49.27} & \textbf{32.22} & \textbf{50.29} \\
    $\Delta$ & +6.08 & +9.83 & +8.23 & +4.44 & +4.69 & +6.65 \\
    \bottomrule
    \end{tabular*}
\end{table*}

\begin{table*}[t]
\centering
\caption{Performance on search measured by greedy-decoding accuracy. Best in \textbf{bold} and second-best \underline{underlined} within each backbone group. $\Delta$ reports gains of COPC over PPO.}
\vspace{3pt}
\label{tab:searchr1_main_results}
\setlength{\tabcolsep}{2pt}
    \begin{tabular*}{\textwidth}{@{\extracolsep{\fill}}lcccccccc@{}}
    \toprule
    \textbf{Method} & \textbf{NQ} & \textbf{TriviaQA} & \textbf{PopQA} & \textbf{HotpotQA} & \textbf{2Wiki} & \textbf{MuSiQue} & \textbf{Bamboogle} & \textbf{Avg.} \\
    \midrule
    \multicolumn{9}{@{}l}{\textit{Llama-3.2-3B-Instruct}} \\
    \addlinespace
    PPO & 41.00 & 52.20 & 37.20 & 37.60 & 27.00 & 26.00 & 40.80 & 37.40 \\
    V-trace & 40.40 & 55.40 & 36.80 & 38.20 & 25.20 & 26.20 & 42.40 & 37.80 \\
    DAPO & \underline{44.00}& 54.60 & 36.80 & 39.40 & 39.80 & 28.80 & 45.60 & 41.29 \\
    SAO & 39.80 & 48.20 & \underline{39.20} & 37.40 & 40.20 & 27.20 & \underline{46.40} & 39.77 \\
    AReaL & 40.80 & 52.80 & \underline{39.20} & \underline{43.20} & \textbf{41.00} & \underline{30.80} & 44.80 & \underline{41.80} \\
    \midrule
    COPC w/o RC & 43.00 & \underline{56.20} & \textbf{39.80} & 42.00 & 29.40 & 28.80 & \underline{46.40} & 40.80 \\
    \textbf{COPC (ours)} & \textbf{45.80} & \textbf{57.20} & 38.80 & \textbf{47.60} & \underline{40.40} & \textbf{32.60} & \textbf{49.60} & \textbf{44.57} \\
    $\Delta$ & +4.80 & +5.00 & +1.60 & +10.00 & +13.40 & +6.60 & +8.80 & +7.17 \\
    \bottomrule
    \end{tabular*}
\end{table*}

\begin{figure}[t]
    \centering
    \includegraphics[width=0.48\linewidth]{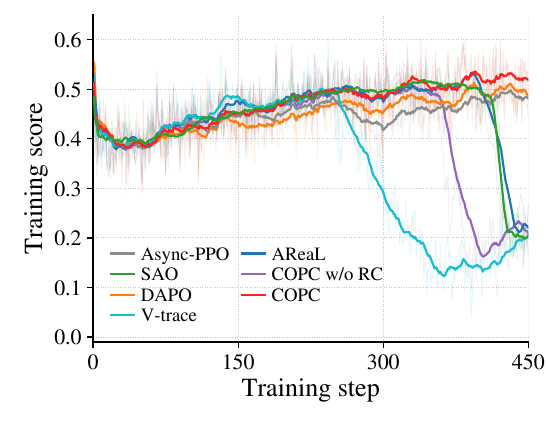}
    \hfill
    \includegraphics[width=0.48\linewidth]{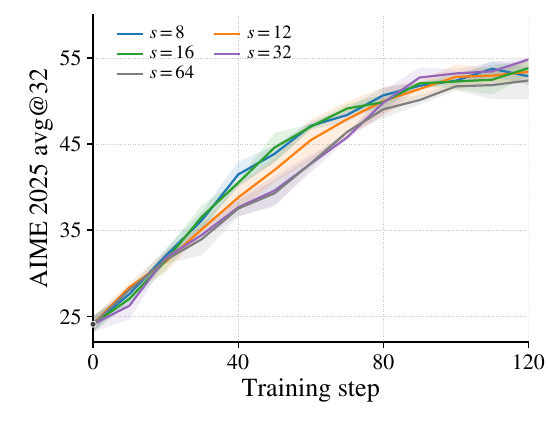}
    \caption{Training stability and robustness. Left: training dynamics of asynchronous methods on search. Right: COPC training dynamics on AIME 2025 under different policy-staleness levels.}
    \label{fig:stability_robustness}
\end{figure}

We evaluate COPC along four dimensions. Joint parameter sweeps test the coordination hypothesis across staleness levels, while experiments on tool-integrated mathematical reasoning and search compare COPC with synchronous and asynchronous baselines and isolate the contribution of advantage-side correction. We further assess robustness to policy staleness and training efficiency through step time, throughput, and step-time speedup over synchronous PPO. Detailed training and inference configurations, including method-specific settings, are provided in Appendix~\ref{ape:detailed_training_configurations}.

\subsection{Experimental setup}
All experiments are implemented in Slime~\citep{slime_github}, which supports asynchronous RL by overlapping rollout generation with policy optimization. Within each task setting, all methods share the same rollout and evaluation implementations while retaining their method-specific objectives and return or advantage estimators.

We use a strengthened PPO training recipe incorporating selected practices from VAPO~\citep{yue2025vapo}, including Clip-Higher, critic pretraining, group sampling, advantage normalization, and clipped value regression. We use $\gamma=\lambda=1$ throughout all experiments. These choices provide a training backbone while keeping the comparison focused on policy- and advantage-side correction.

\paragraph{Baselines.}
We compare COPC with representative asynchronous RL baselines and a controlled ablation. PPO~\citep{schulman2017proximal} uses the clipped surrogate objective with GAE advantages and follows our strengthened PPO recipe incorporating selected practices from VAPO~\citep{yue2025vapo}, serving as a strong reference without explicit off-policy correction. DAPO~\citep{yu2025dapo} provides a critic-free alternative with group-relative advantages and asymmetric ratio clipping. AReaL~\citep{fu2026areal} separates behavior-policy correction from proximal policy constraints, while SAO~\citep{hou2026single} applies token-level ratio masking to suppress updates under large policy mismatch. V-trace~\citep{espeholt2018impala} uses truncated importance ratios for corrected return estimation and actor advantage construction. We additionally report COPC w/o RC, which replaces COPC's corrected returns and advantages with standard GAE while keeping all other settings fixed, isolating the contribution of advantage-side correction beyond policy-side masking.

\paragraph{Tool-integrated mathematical reasoning.}
We use Qwen3-4B-Instruct~\citep{yang2025qwen3} as the policy backbone. Following the released ReTool training recipe~\citep{feng2025retool}, we initialize the policy with supervised fine-tuning on the ReTool SFT dataset and perform RL training on DAPO-Math-17K~\citep{yu2025dapo}. We evaluate on AIME 2024/2025/2026~\citep{aime}, BeyondAIME~\citep{bytedance_seed_2025_beyondaime}, and BrUMO~\citep{balunovic2026matharena}.

\paragraph{Search.}
We use Llama-3.2-3B-Instruct~\citep{grattafiori2024llama3} as the policy backbone. Before RL, we fine-tune the policy on 2,000 examples following the SFT setup of~\citet{li2025chain}. Following the Search-R1 setup~\citep{jin2025searchr1}, RL training uses Natural Questions (NQ)~\citep{kwiatkowski-etal-2019-natural} and HotpotQA~\citep{yang2018hotpotqadatasetdiverseexplainable}. Retrieval uses the 2018 Wikipedia corpus~\citep{karpukhin-etal-2020-dense} indexed with E5-base-v2 embeddings~\citep{wang2024textembeddingsweaklysupervisedcontrastive}, returning the top three documents per query. We evaluate on seven benchmarks: the single-hop datasets NQ, TriviaQA~\citep{joshi2017triviaqalargescaledistantly}, and PopQA~\citep{mallen2023trustlanguagemodelsinvestigating}, and the multi-hop datasets HotpotQA, 2WikiMultiHopQA~\citep{ho2020constructingmultihopqadataset}, MuSiQue~\citep{trivedi2021musique}, and Bamboogle~\citep{press2023measuringnarrowingcompositionalitygap}.

\subsection{Joint Parameter Sweeps}
Figure~\ref{fig:coupled_sweeps} reveals an interaction between policy- and advantage-side correction across all staleness levels. When $\eta_P=0.25$, increasing $\eta_A$ from $0.25$ to $0.75$ improves performance, whereas the change degrades performance when $\eta_P=0.75$. This reversal persists across all staleness levels, demonstrating that the effect of adjusting one correction channel depends on the setting of the other.

The sweeps also reveal a compensatory structure: stronger correction in one channel tends to favor weaker correction in the other. Importantly, high performance is not confined to a narrowly tuned parameter pair; a broad range of correction strengths remains competitive across all tested staleness levels, indicating substantial hyperparameter robustness. Together, these results support the coordination hypothesis in Section~\ref{sec:theory} and motivate joint calibration of the two channels, while enabling a shared setting across subsequent experiments without staleness-specific retuning.

\subsection{Main Results and Ablations}
Tables~\ref{tab:retool_main_results} and~\ref{tab:searchr1_main_results} report results on tool-integrated mathematical reasoning and search. COPC achieves the highest average performance across settings. On mathematical reasoning, it attains the best result on all five benchmarks, reaching an average of $50.29$. On search, COPC achieves the highest average accuracy of $44.57$ and outperforms the strongest asynchronous baseline by $2.77$ points. This advantage is aligned with COPC's two-channel design: rather than correcting policy mismatch alone, COPC controls policy-side updates and continuation-dependent return and advantage estimation.

The ablation COPC w/o RC replaces corrected returns and advantages with standard GAE while retaining other settings. Return correction improves performance from $48.11$ to $50.29$ on mathematical reasoning and from $40.80$ to $44.57$ on search, outperforming the ablation on all five mathematical-reasoning benchmarks and six of seven search benchmarks. These gains show that policy-side masking alone cannot explain COPC's improvement; advantage-side correction provides a complementary benefit, consistent with the advantage-staleness motivation in Section~\ref{sec:preliminaries}.

\subsection{Training Stability, Robustness, and Efficiency}
Figure~\ref{fig:stability_robustness} evaluates training stability and robustness to increasing policy staleness. On search, COPC remains stable throughout training, whereas most evaluated asynchronous baselines exhibit late-stage performance collapse. COPC w/o RC also degrades sharply late in training, suggesting that advantage-side correction contributes not only to final performance but also to training stability. On AIME 2025, COPC remains stable as staleness increases from $s=8$ to $s=64$, with comparable final avg@32 performance across all settings, demonstrating robustness to highly stale rollouts.

Across both task settings, COPC closely matches the efficiency of asynchronous PPO, incurring only about a $1\%$ increase in step time and at most a $2.5\%$ reduction in throughput, while retaining an approximately $1.7\times$ step-time speedup over synchronous PPO. Together, these results show that COPC improves the robustness of asynchronous training without sacrificing its efficiency advantage.

\begin{table*}[t]
    \centering
    \caption{Training efficiency on tool-integrated mathematical reasoning and search.}
    \label{tab:training_efficiency}
    \vspace{3pt}
    \begin{tabular*}{\textwidth}{@{\extracolsep{\fill}}lccc@{}}
        \toprule
        & Step time (s) & Throughput (tokens/s/GPU) & Step-time speedup \\
        \midrule
        \multicolumn{4}{@{}l}{\textit{Tool-integrated mathematical reasoning}} \\
        \addlinespace
        Synchronous PPO & 410.59 & 614.96 & $1.00\times$ \\
        Asynchronous PPO & 236.80 & 856.93 & $1.73\times$ \\
        \textbf{COPC (ours)} & 238.95 & 836.85 & $1.72\times$ \\
        \midrule
        \multicolumn{4}{@{}l}{\textit{Search}} \\
        \addlinespace
        Synchronous PPO & 213.18 & 65.35 & $1.00\times$ \\
        Asynchronous PPO & 124.90 & 96.44 & $1.71\times$ \\
        \textbf{COPC (ours)} & 126.18 & 94.92 & $1.69\times$ \\
        \bottomrule
    \end{tabular*}
\end{table*}

%% file: sections/conclusion.tex
\section{Conclusion}
We show that correcting stale trajectories in asynchronous LLM reinforcement learning requires more than controlling policy mismatch in the actor update. We identify \emph{advantage staleness}, arising from behavior-policy continuations embedded in return and advantage estimates, and formulate stale-trajectory correction through complementary policy- and advantage-side channels. Our bias and variance decompositions reveal nonseparable interactions between the two channels, motivating their joint calibration. Building on this insight, we introduce COPC, which combines token-level policy-side masking with advantage-side correction based on two-sided clipped-ratio weighting of TD residuals. Across tool-integrated mathematical reasoning and search, COPC improves performance and training stability, remains robust across correction strengths and substantial policy staleness, and incurs minimal additional overhead while retaining a $1.7\times$ step-time speedup over synchronous PPO. More broadly, our results suggest that off-policy correction in asynchronous LLM RL should be treated as a coupled estimation problem rather than solely as policy-ratio control.

%% file: sections/appendix.tex
\section{Proof of Proposition~\ref{prop:return_fixed_point}}
\label{app:return_fixed_point}

Fix the learner policy $\pi_\theta$ and behavior policy $\mu$. Let $(\gamma,\lambda)\in[0,1]^2$ and $(\epsilon_\ell,\epsilon_h)\in[0,1]\times[0,\infty)$ be fixed. Consider a finite-horizon process with time-indexed states and episode length $T=|y|\le H$ almost surely, where $H\in\mathbb N$. Value functions satisfy $V(s_{T+1})=0$ at terminal states. For every nonterminal state $s$, assume $\operatorname{supp}\pi_\theta(\cdot\mid s)\subseteq\operatorname{supp}\mu(\cdot\mid s)$. All expectations are assumed finite.

\begin{proof}
For $a\in\operatorname{supp}\mu(\cdot\mid s)$, define
\begin{equation*}
    \bar\rho(s,a)=\operatorname{clip}\!\left(\frac{\pi_\theta(a\mid s)}{\mu(a\mid s)},1-\epsilon_\ell,1+\epsilon_h\right),
    \qquad
    q(s,a)=
    \begin{cases}
        \mu(a\mid s)\bar\rho(s,a), & a\in\operatorname{supp}\mu(\cdot\mid s),\\
        0, & \text{otherwise}.
    \end{cases}
\end{equation*}
By the support assumption,
\begin{equation*}
    q(s,a)=\operatorname{clip}\!\left(\pi_\theta(a\mid s),(1-\epsilon_\ell)\mu(a\mid s),(1+\epsilon_h)\mu(a\mid s)\right)
\end{equation*}
for every action $a$. Define $\kappa(s)=\sum_a q(s,a)$. Since $\pi_\theta(\cdot\mid s)$ is a probability distribution and its support is contained in that of $\mu(\cdot\mid s)$, there exists at least one action with $q(s,a)>0$. Hence $\kappa(s)>0$ at every nonterminal state.

Write $\bar\rho_j=\bar\rho(s_j,y_j)$ and $\delta_j^V=r_j+\gamma V(s_{j+1})-V(s_j)$. The expected corrected-return operator is
\begin{equation}
\label{eq:proof_return_operator}
    (\mathcal R V)(s)=V(s)+\mathbb E_\mu\!\left[\sum_{j=t}^{T}(\gamma\lambda)^{j-t}\bar\rho_j\delta_j^V\,\middle|\,s_t=s\right].
\end{equation}

We first establish uniqueness. Suppose $V_1$ and $V_2$ are fixed points of $\mathcal R$, and let $U=V_1-V_2$. Subtracting their fixed-point equations gives
\begin{equation}
\label{eq:proof_fixed_point_difference}
    \mathbb E_\mu\!\left[\sum_{j=t}^{T}(\gamma\lambda)^{j-t}\bar\rho_j\bigl(\gamma U(s_{j+1})-U(s_j)\bigr)\,\middle|\,s_t=s\right]=0.
\end{equation}
We proceed by backward induction over the time index. At the latest possible nonterminal time $H$, all successor states are terminal, so~\eqref{eq:proof_fixed_point_difference} reduces to $-\kappa(s)U(s)=0$. Since $\kappa(s)>0$, we have $U(s)=0$ for every state at time $H$. Now suppose $U$ vanishes on all states at times later than $t$. For any state $s$ at time $t$, every future-state term in~\eqref{eq:proof_fixed_point_difference} vanishes, again leaving $-\kappa(s)U(s)=0$. Thus $U(s)=0$. Repeating backward over all time indices yields $U\equiv0$, so the fixed point is unique.

It remains to identify the fixed point. Define
\begin{equation*}
    \widetilde\pi(a\mid s)=\frac{q(s,a)}{\kappa(s)}
    =\frac{\mu(a\mid s)\bar\rho(s,a)}{\sum_b\mu(b\mid s)\bar\rho(s,b)},
\end{equation*}
where the second equality uses the zero extension outside $\operatorname{supp}\mu(\cdot\mid s)$. Since $\kappa(s)>0$, $\widetilde\pi$ is well defined and coincides with the policy in~\eqref{eq:effective_correction_policy}.

Let $r(s,a)$ denote the expected immediate reward and $P(\cdot\mid s,a)$ the transition kernel. At $V^{\widetilde\pi}$,
\begin{align*}
    \mathbb E_\mu\!\left[\bar\rho_t\delta_t^{V^{\widetilde\pi}}\,\middle|\,s_t=s\right]
    &=\sum_a q(s,a)\left[r(s,a)+\gamma\mathbb E_{s'\sim P(\cdot\mid s,a)}V^{\widetilde\pi}(s')-V^{\widetilde\pi}(s)\right] \\
    &=\kappa(s)\left[\sum_a\widetilde\pi(a\mid s)\left(r(s,a)+\gamma\mathbb E_{s'\sim P(\cdot\mid s,a)}V^{\widetilde\pi}(s')\right)-V^{\widetilde\pi}(s)\right] \\
    &=0,
\end{align*}
where the final equality follows from the finite-horizon Bellman equation for $\widetilde\pi$.

Therefore, each weighted TD residual has zero conditional expectation at its corresponding state. Applying the tower property to each term in~\eqref{eq:proof_return_operator} gives
\begin{equation*}
    (\mathcal R V^{\widetilde\pi})(s)-V^{\widetilde\pi}(s)
    =\mathbb E_\mu\!\left[\sum_{j=t}^{T}(\gamma\lambda)^{j-t}\bar\rho_j\delta_j^{V^{\widetilde\pi}}\,\middle|\,s_t=s\right]
    =0.
\end{equation*}
Hence $V^{\widetilde\pi}$ is a fixed point of $\mathcal R$. Uniqueness then completes the proof.
\end{proof}

\begin{corollary}[Convergence of the expected return iteration]
\label{cor:return_iteration_convergence}
Under the assumptions of Proposition~\ref{prop:return_fixed_point}, suppose additionally that the time-indexed nonterminal state space is finite. Fix $\pi_\theta$, $\mu$, and the clipping thresholds. Then, for any initial value function $V^{(0)}$ satisfying the terminal condition, the exact iteration $V^{(n+1)}=\mathcal R V^{(n)}$ converges to $V^{\widetilde\pi}$:
\begin{equation*}
    \lim_{n\to\infty}\|V^{(n)}-V^{\widetilde\pi}\|_\infty=0.
\end{equation*}
\end{corollary}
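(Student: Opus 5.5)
The plan is to show that the exact iteration $V\mapsto\mathcal R V$ settles to its fixed point in finitely many steps, by backward induction over the time index. Convergence in $\|\cdot\|_\infty$ then follows trivially. The idea is to mirror the uniqueness argument of Proposition~\ref{prop:return_fixed_point}, applied now to the error $U^{(n)}=V^{(n)}-V^{\widetilde\pi}$ rather than to the difference of two fixed points.

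\textbf{Step 1: error recursion.} Subtract the fixed-point identity $\mathcal R V^{\widetilde\pi}=V^{\widetilde\pi}$ from $V^{(n+1)}=\mathcal R V^{(n)}$. Since $\mathcal R$ is affine in $V$, for a state $s$ at time $t$ this gives
\begin{equation*}
U^{(n+1)}(s)=U^{(n)}(s)+\mathbb E_\mu\!\left[\sum_{j=t}^{T}(\gamma\lambda)^{j-t}\bar\rho_j\bigl(\gamma U^{(n)}(s_{j+1})-U^{(n)}(s_j)\bigr)\,\middle|\,s_t=s\right].
\end{equation*}
Isolate the $j=t$ term that involves $U^{(n)}(s)$ itself. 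Its conditional expectation is $\kappa(s)U^{(n)}(s)$, so
\begin{equation*}
U^{(n+1)}(s)=(1-\kappa(s))U^{(n)}(s)+F_n(s).
\end{equation*}
Here $F_n(s)$ is a conditional expectation of finitely many terms, each involving $U^{(n)}$ only at states with time index strictly greater than $t$. This holds because time-indexing makes $s_{j}$ for $j>t$ later states, and $T\le H$. The terminal condition keeps $U^{(n)}=0$ at terminal states.

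\textbf{Step 2: the main obstacle, controlling the factor $1-\kappa(s)$.} If $\kappa(s)=1$, as in the unclipped case, the diagonal term vanishes and the induction gives exact convergence after at most $H$ steps. In general, $\kappa(s)=\sum_a\operatorname{clip}(\pi_\theta(a\mid s),(1-\epsilon_\ell)\mu(a\mid s),(1+\epsilon_h)\mu(a\mid s))$ is positive but need not equal one.

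I will need $|1-\kappa(s)|<1$, i.e.\ $0<\kappa(s)<2$. Positivity is already established. For the upper bound I would try to prove $\kappa(s)\le 1+\epsilon_h$ and combine it with the assumed $\epsilon_h$ range. If $\epsilon_h$ can be large, this bound may fail, and the claim would then need $\epsilon_h<1$ or a bound on $\kappa$.

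A better route is to exploit the fact that clipping caps each $q(s,a)$ near $\pi_\theta(a\mid s)$. The case split is:
\begin{itemize}
\item actions with $\pi_\theta(a\mid s)\ge(1-\epsilon_\ell)\mu(a\mid s)$ give $q(s,a)\le\pi_\theta(a\mid s)$ except where the ratio is clipped upward from below;
\item on the remaining actions, $q(s,a)\le\mu(a\mid s)$.
\end{itemize}
Summing over both cases gives $\kappa(s)\le\sum_a\max(\pi_\theta(a\mid s),\mu(a\mid s))\le 2$. I would then check strictness using $\epsilon_\ell\le 1$ and the fact that some mass is unclipped, and treat $\kappa(s)=2$ as a degenerate edge case requiring care. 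With finitely many nonterminal states, set $\beta=\max_s|1-\kappa(s)|<1$.

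\textbf{Step 3: backward induction.} At time $H$ we have $F_n\equiv0$, so $|U^{(n)}(s)|\le\beta^n\|U^{(0)}\|_\infty\to0$. Now assume $\|U^{(n)}\|_\infty\to0$ on all states with time index greater than $t$. Then $F_n(s)\to0$, since it is a finite combination with bounded coefficients given bounded ratios $\bar\rho$, and finiteness of the state space makes these bounds uniform. The scalar recursion $u_{n+1}=(1-\kappa)u_n+f_n$ with $|1-\kappa|<1$ and $f_n\to0$ gives $u_n\to0$, by a standard discrete Gronwall/Toeplitz argument. Inducting down to $t=1$ and taking the maximum over the finitely many states yields $\|V^{(n)}-V^{\widetilde\pi}\|_\infty\to0$.
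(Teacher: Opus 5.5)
Your strategy works, but the one inequality the corollary rests on, $\kappa(s)<2$, is left open, and the reason you sketch for strictness is wrong. Your pointwise bound $q(s,a)\le\max\bigl(\pi_\theta(a\mid s),\mu(a\mid s)\bigr)$ is correct. (It uses $1-\epsilon_\ell\le 1$, not $\epsilon_\ell\le 1$.) Strictness cannot come from ``some mass being unclipped'', because every action can be clipped at once. For example, take $\mu(\cdot\mid s)=(\tfrac12,\tfrac12)$ and $\pi_\theta(\cdot\mid s)=(0.9,0.1)$ with interval $[0.8,1.28]$. The correct reason is support overlap:
\[
\kappa(s)\le\sum_a\max\bigl(\pi_\theta(a\mid s),\mu(a\mid s)\bigr)=2-\sum_a\min\bigl(\pi_\theta(a\mid s),\mu(a\mid s)\bigr).
\]
Since $\operatorname{supp}\pi_\theta(\cdot\mid s)\subseteq\operatorname{supp}\mu(\cdot\mid s)$, every action with $\pi_\theta(a\mid s)>0$ contributes a positive minimum. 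Hence $\kappa(s)\le 1+d_{\mathrm{TV}}\bigl(\pi_\theta(\cdot\mid s),\mu(\cdot\mid s)\bigr)<2$, and there is no $\kappa(s)=2$ edge case.

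This step had to be closed. If $\kappa(s)=2$ could occur, the diagonal factor would be $-1$ and the iteration would fail to converge for some $V^{(0)}$. Separately, your opening claim of convergence in finitely many steps holds only when $\kappa\equiv1$ (e.g.\ when no ratio is clipped). Your Step 3 actually proves asymptotic convergence, which is what the corollary asserts.

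With that fixed, your argument is correct and runs parallel to the paper's. The paper bounds $\kappa$ uniformly using the linear majorant $\operatorname{clip}(\xi,\ell,u)\le\ell+(1-\ell/u)\xi$ for $\xi\ge0$, together with $\mathbb E_\mu[\rho_t\mid s_t=s]=1$. This gives $\kappa(s)\le 1+\ell(1-1/u)<2$, a bound depending only on the thresholds. It then writes $\mathcal R V=b+MV$ with $M$ upper triangular and diagonal entries $1-\kappa(s)$, and concludes $M^n\to0$ from $r(M)=\max_s|1-\kappa(s)|<1$.

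Your backward induction with the scalar recursion $u_{n+1}=(1-\kappa)u_n+f_n$ is a hands-on proof of that same fact: a triangular matrix with diagonal entries of modulus below one has powers tending to zero. It avoids the spectral-radius step at the cost of a slightly longer induction. The real difference is the bound on $\kappa$. The paper's is uniform over states and policies. Yours depends on the state, but it ties the overshoot of $\kappa(s)$ above $1$ directly to the policy mismatch at $s$.
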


\begin{proof}
Let $\ell=1-\epsilon_\ell\in[0,1]$ and $u=1+\epsilon_h\in[1,\infty)$. A direct case analysis gives
\begin{equation*}
    \operatorname{clip}(\xi,\ell,u)\le \ell+\left(1-\frac{\ell}{u}\right)\xi
    \qquad\text{for all }\xi\ge0.
\end{equation*}
By the support assumption, $\mathbb E_\mu[\rho_t\mid s_t=s]=1$. Therefore,
\begin{equation}
\label{eq:clipped_ratio_mean_bound}
    0<\kappa(s)=\mathbb E_\mu[\bar\rho_t\mid s_t=s]
    \le 1+\ell\left(1-\frac{1}{u}\right)<2.
\end{equation}

Represent value functions on the finite set of nonterminal states as vectors ordered by increasing time index. Since $\mathcal R$ is affine in $V$, write $\mathcal R V=b+MV$, where $b$ is independent of $V$. From~\eqref{eq:proof_return_operator}, the update at each state depends only on its own value and values at later time indices. Hence $M$ is upper triangular. The coefficient multiplying the current-state value is
\begin{equation*}
    1-\mathbb E_\mu[\bar\rho_t\mid s_t=s]=1-\kappa(s),
\end{equation*}
so the diagonal entries are $M_{ss}=1-\kappa(s)$. Therefore, writing $r(M)$ for the spectral radius of $M$,
\begin{equation*}
    r(M)=\max_s|1-\kappa(s)|<1,
\end{equation*}
where the strict inequality follows from~\eqref{eq:clipped_ratio_mean_bound} and finiteness of the state space.

By Proposition~\ref{prop:return_fixed_point}, $V^{\widetilde\pi}=b+MV^{\widetilde\pi}$. Subtracting this fixed-point identity from the iteration yields
\begin{equation*}
    V^{(n)}-V^{\widetilde\pi}=M^n\bigl(V^{(0)}-V^{\widetilde\pi}\bigr).
\end{equation*}
Since $r(M)<1$, $M^n\to0$ in any matrix norm. Consequently,
\begin{equation*}
    \|V^{(n)}-V^{\widetilde\pi}\|_\infty
    \le \|M^n\|_\infty\|V^{(0)}-V^{\widetilde\pi}\|_\infty
    \to0.
\end{equation*}
\end{proof}

\section{Proofs for Coupling Between Correction Channels}
\label{proof:couple_bias_variance}

This appendix proves the results in Section~\ref{sec:theory}. We fix the learner parameters $\theta$, the value estimator, and the behavior-policy sampling scheme, and consider an arbitrary fixed pair of correction settings $(\eta_P,\eta_A)$. Expectations are taken over the sampled prompt $x\sim\cD$, the response $y\sim\mu(\cdot\mid x)$, and any additional randomness used to construct the advantage estimates, including co-sampled responses when the estimator uses a prompt group.

Responses are nonempty and have finite length almost surely. We assume that all terms in the expectation decompositions are absolutely integrable and that all quantities appearing in the variance results have finite second moments. No independence is assumed between tokens within a response or between responses within the same prompt group unless explicitly stated.

\subsection{Proof of Proposition~\ref{prop:bias_decomposition}}

\begin{proof}
Consider a realization of a prompt, its response, and any additional samples used to construct the advantage estimates. For each valid token position $t\in\{1,\ldots,|y|\}$, recall
\begin{equation*}
    \Delta_t(\eta_P)=w_t(\eta_P)-\rho_t(\theta),\qquad
    e_t(\eta_A)=\hat A_t(\eta_A)-A_t^{\pi_\theta}.
\end{equation*}
Substituting these definitions gives the samplewise identity
\begin{equation*}
\begin{aligned}
    w_t(\eta_P)\hat A_t(\eta_A)-\rho_t(\theta)A_t^{\pi_\theta}
    &=\bigl(\rho_t(\theta)+\Delta_t(\eta_P)\bigr)
      \bigl(A_t^{\pi_\theta}+e_t(\eta_A)\bigr)
      -\rho_t(\theta)A_t^{\pi_\theta} \\
    &=\Delta_t(\eta_P)A_t^{\pi_\theta}
      +\rho_t(\theta)e_t(\eta_A)
      +\Delta_t(\eta_P)e_t(\eta_A).
\end{aligned}
\end{equation*}
The first two terms capture the direct errors from the policy- and advantage-side channels, respectively, while the third captures their interaction.

Multiplying by the score $\psi_t(\theta)$, summing over valid token positions, and applying the response-length normalization yields
\begin{equation*}
\begin{aligned}
    &\frac{1}{|y|}\sum_{t=1}^{|y|}\psi_t(\theta)
    \bigl[w_t(\eta_P)\hat A_t(\eta_A)-\rho_t(\theta)A_t^{\pi_\theta}\bigr] \\
    &\qquad=
    \frac{1}{|y|}\sum_{t=1}^{|y|}\psi_t(\theta)
    \bigl[
        \Delta_t(\eta_P)A_t^{\pi_\theta}
        +\rho_t(\theta)e_t(\eta_A)
        +\Delta_t(\eta_P)e_t(\eta_A)
    \bigr].
\end{aligned}
\end{equation*}
Taking expectations under the common sampling law and using the definitions of $g(\eta_P,\eta_A)$ and $g^\star(\theta)$ in~\eqref{eq:expected_two_sided_gradient} and~\eqref{eq:reference_gradient} gives
\begin{equation*}
    g(\eta_P,\eta_A)-g^\star(\theta)
    =
    \BE\!\left[
        \frac{1}{|y|}\sum_{t=1}^{|y|}
        \psi_t(\theta)
        \bigl(
            \Delta_t(\eta_P)A_t^{\pi_\theta}
            +\rho_t(\theta)e_t(\eta_A)
            +\Delta_t(\eta_P)e_t(\eta_A)
        \bigr)
    \right].
\end{equation*}
The random response length remains inside the samplewise expression, so the argument applies directly to responses of varying lengths. By the stated integrability assumptions, all terms are well defined. Since the decomposition holds before taking expectations, it requires no independence between the two correction errors, token contributions, or co-sampled responses. This proves Proposition~\ref{prop:bias_decomposition}.
\end{proof}

\subsection{Proof of Proposition~\ref{prop:variance_decomposition}}

\begin{proof}
Fix a token position $t$ and correction settings $(\eta_P,\eta_A)$. Recall that $C_t$ denotes conditioning information under which $|y|$, $\psi_t(\theta)$, and $w_t(\eta_P)$ are measurable for every policy-side setting $\eta_P$. Define
\begin{equation*}
    Z_t=\frac{w_t(\eta_P)\psi_t(\theta)\hat A_t(\eta_A)}{|y|}.
\end{equation*}
For positions beyond the realized response length, we set $w_t(\eta_P)$, $\psi_t(\theta)$, and $\hat A_t(\eta_A)$ to zero and assign fixed dummy values to $(s_t,y_t)$. Thus, all token-level quantities are well defined for a fixed index $t$ across responses of varying lengths.

By construction, $w_t(\eta_P)\psi_t(\theta)/|y|$ is measurable with respect to $C_t$, and the same conditioning information is used across correction settings. Under the standing moment assumptions, $\hat A_t(\eta_A)$ and $Z_t$ are square-integrable. Recall
\begin{equation*}
    m_t(\eta_A)=\BE[\hat A_t(\eta_A)\mid C_t],\qquad
    \sigma_t^2(\eta_A)=\operatorname{Var}[\hat A_t(\eta_A)\mid C_t].
\end{equation*}
Because padded advantage values are identically zero, both conditional moments vanish at padded positions.

Using the $C_t$-measurability of the coefficient,
\begin{equation*}
    \BE[Z_t\mid C_t]
    =\frac{w_t(\eta_P)\psi_t(\theta)m_t(\eta_A)}{|y|}.
\end{equation*}
Hence
\begin{equation*}
    Z_t-\BE[Z_t\mid C_t]
    =\frac{w_t(\eta_P)\psi_t(\theta)}{|y|}
    \bigl(\hat A_t(\eta_A)-m_t(\eta_A)\bigr),
\end{equation*}
and therefore
\begin{equation*}
    \BE\!\left[\bigl\|Z_t-\BE[Z_t\mid C_t]\bigr\|_2^2\mid C_t\right]
    =\frac{w_t^2(\eta_P)\|\psi_t(\theta)\|_2^2\sigma_t^2(\eta_A)}{|y|^2}.
\end{equation*}

To obtain the unconditional variance, write
\begin{equation*}
    Z_t-\BE[Z_t]
    =\bigl(Z_t-\BE[Z_t\mid C_t]\bigr)
    +\bigl(\BE[Z_t\mid C_t]-\BE[Z_t]\bigr).
\end{equation*}
The first term has zero conditional mean given $C_t$, whereas the second is $C_t$-measurable, so their expected inner product is zero. Expanding the squared norm therefore gives the vector law of total variance:
\begin{equation}
\label{eq:total_variance_appendix}
    \mathbb V[Z_t]
    =\mathbb V[\BE[Z_t\mid C_t]]
    +\BE\!\left[\bigl\|Z_t-\BE[Z_t\mid C_t]\bigr\|_2^2\right].
\end{equation}
Substituting the conditional mean and applying the tower property yields
\begin{equation*}
    \mathbb V[Z_t]
    =\mathbb V\!\left[\frac{w_t(\eta_P)\psi_t(\theta)m_t(\eta_A)}{|y|}\right]
    +\BE\!\left[\frac{w_t^2(\eta_P)\|\psi_t(\theta)\|_2^2\sigma_t^2(\eta_A)}{|y|^2}\right].
\end{equation*}
This is the claimed decomposition. The argument relies only on conditional measurability and the stated moment assumptions; no independence between the policy-side weight and the advantage estimate is required.
\end{proof}

\subsection{Response-Level Variance Decomposition}
\label{app:trajectory_variance}

Assume additionally that $|y|\le T_{\max}$ almost surely for some finite $T_{\max}$. Under the zero-padding convention, the response-level gradient contribution satisfies
\begin{equation*}
    Z=\sum_{t=1}^{|y|}Z_t=\sum_{t=1}^{T_{\max}}Z_t.
\end{equation*}
Its total variance decomposes as
\begin{equation}
\label{eq:trajectory_variance_decomposition}
    \mathbb V[Z]
    =\sum_{t=1}^{T_{\max}}\mathbb V[Z_t]
    +2\sum_{1\le t<j\le T_{\max}}
    \BE\!\left[(Z_t-\BE[Z_t])^\top(Z_j-\BE[Z_j])\right].
\end{equation}

\begin{proof}
Since the sum is finite, linearity of expectation gives
\begin{equation*}
    \BE[Z]=\sum_{t=1}^{T_{\max}}\BE[Z_t],\qquad
    Z-\BE[Z]=\sum_{t=1}^{T_{\max}}\bigl(Z_t-\BE[Z_t]\bigr).
\end{equation*}
Expanding the squared Euclidean norm yields
\begin{equation*}
    \bigl\|Z-\BE[Z]\bigr\|_2^2
    =\sum_{t=1}^{T_{\max}}\bigl\|Z_t-\BE[Z_t]\bigr\|_2^2
    +2\sum_{1\le t<j\le T_{\max}}
    \bigl(Z_t-\BE[Z_t]\bigr)^\top
    \bigl(Z_j-\BE[Z_j]\bigr).
\end{equation*}
The finite second-moment assumptions ensure that all terms are integrable, with cross terms controlled by Cauchy--Schwarz. Taking expectations and using $\mathbb V[X]=\BE[\|X-\BE[X]\|_2^2]$ gives~\eqref{eq:trajectory_variance_decomposition}.
\end{proof}

Each $Z_t$ retains the random response-length normalization and vanishes beyond termination. Thus, the decomposition captures both cross-token dependence and variability induced by response length, without assuming independence among token contributions.

\subsection{Proof of Corollary~\ref{cor:joint_gradient_error}}

\begin{proof}
For prompt group $i$, let
\begin{equation*}
    \bar Z_{i,K}=\frac{1}{K}\sum_{k=1}^{K}Z_i^{(k)}
\end{equation*}
denote its average response-level gradient contribution. By definition,
\begin{equation*}
    \hat g_{N,K}=\frac{1}{N}\sum_{i=1}^{N}\bar Z_{i,K}.
\end{equation*}
Each response contribution has marginal expectation $g(\eta_P,\eta_A)$, and hence
\begin{equation*}
    \BE[\bar Z_{i,K}]=g(\eta_P,\eta_A),
    \qquad
    \BE[\hat g_{N,K}]=g(\eta_P,\eta_A).
\end{equation*}
For brevity, write $g=g(\eta_P,\eta_A)$ and $g^\star=g^\star(\theta)$. Then
\begin{equation*}
    \hat g_{N,K}-g^\star=(\hat g_{N,K}-g)+(g-g^\star).
\end{equation*}
Since $\BE[\hat g_{N,K}-g]=0$, the cross term vanishes, giving
\begin{equation}
\label{eq:mse_decomposition_appendix}
    \BE\!\left[\|\hat g_{N,K}-g^\star\|_2^2\right]
    =\BE\!\left[\|\hat g_{N,K}-g\|_2^2\right]
    +\|g-g^\star\|_2^2.
\end{equation}

The $N$ prompt groups are i.i.d., while arbitrary dependence among the $K$ responses within each group is retained in $\bar Z_{i,K}$. Therefore,
\begin{equation*}
\begin{aligned}
    \BE\!\left[\|\hat g_{N,K}-g\|_2^2\right]
    =\BE\!\left[
        \left\|
        \frac{1}{N}\sum_{i=1}^{N}(\bar Z_{i,K}-g)
        \right\|_2^2
    \right] 
    =\frac{1}{N^2}\sum_{i=1}^{N}
    \BE\!\left[\|\bar Z_{i,K}-g\|_2^2\right]
    =\frac{1}{N}\mathbb V[\bar Z_K].
\end{aligned}
\end{equation*}
Thus, independence is required only across prompt groups; no independence is assumed among responses within the same group.

By Proposition~\ref{prop:bias_decomposition}, the bias term
$g(\eta_P,\eta_A)-g^\star(\theta)$ admits the exact decomposition in~\eqref{eq:exact_bias_decomposition}. Substituting the variance identity into~\eqref{eq:mse_decomposition_appendix} therefore yields
\begin{equation*}
    \cE(\eta_P,\eta_A)
    =
    \bigl\|g(\eta_P,\eta_A)-g^\star(\theta)\bigr\|_2^2
    +\frac{1}{N}\mathbb V[\bar Z_K(\eta_P,\eta_A)],
\end{equation*}
which proves the corollary.

\end{proof}

\newpage
\section{Detailed Training Configurations}
\label{ape:detailed_training_configurations}

\subsection{Prompt Templates and Reward Design}
\label{app:prompts_rewards}

\paragraph{Tool-integrated mathematical reasoning.}
For tool-augmented mathematical reasoning, we use the Qwen chat template with the built-in Python code-interpreter instructions, shown below, for training and evaluation. During rollout, the assistant may issue structured tool calls; each call is executed and its output is appended to the conversation context before generation resumes, allowing subsequent reasoning to incorporate intermediate computational results.

We use a binary correctness reward. Let $z$ denote the text passed to the mathematical verifier and $a^\star$ the reference answer. We use \texttt{math\_verify} to extract candidate mathematical expressions from $z$ and determine their equivalence to $a^\star$:
\begin{equation}
\label{eq:math_reward}
r_{\mathrm{math}}(z,a^\star)=
\begin{cases}
1, & \text{if an extracted answer is verified as equivalent to } a^\star,\\
0, & \text{otherwise}.
\end{cases}
\end{equation}
No separate formatting reward or tool-use bonus is applied; tool calls affect the reward only through their contribution to a verifiably correct final answer.
\begin{promptbox}{System Prompt}
You are Qwen, created by Alibaba Cloud. You are a helpful assistant.\par
\texttt{\# Tools}\par
You may call one or more functions to assist with the user query.\par
You are provided with function signatures within \texttt{<tools></tools>} XML tags:
\begin{lstlisting}[style=promptcode]
<tools>
{"type": "function",
 "function": {
   "name": "code_interpreter",
   "description": "A tool for executing code.",
   "parameters": {
     "type": "object",
     "properties": {
       "code": {"type": "string", "description": "The code to execute."}
     },
     "required": ["code"]
   }
 }}
</tools>
\end{lstlisting}
For each function call, return a JSON object with function name and arguments within \texttt{<tool\_call></tool\_call>} XML tags:
\begin{lstlisting}[style=promptcode]
<tool_call>
{"name": <function-name>, "arguments": <args-json-object>}
</tool_call>
\end{lstlisting}
\end{promptbox}

\paragraph{Search.}
For search-augmented reasoning, we use the system prompt shown below during both training and evaluation. The prompt specifies the search-tool schema and the required format for structured tool calls.

We use a shaped reward for search-augmented reasoning. Let
$c(s,\mathcal{Y})\in\{0,1\}$ denote normalized exact-match correctness,
$f(s)\in\{0,1\}$ indicate whether a valid boxed answer is present, and
$n$ denote the number of executed search calls. The reward is
\begin{equation}
\label{eq:search_reward}
r_{\mathrm{search}}(s,\mathcal{Y},n)
=
\begin{cases}
2.0, & c(s,\mathcal{Y})=1,\ n>0,\\
1.1, & c(s,\mathcal{Y})=1,\ n=0,\\
-1+0.1f(s)+\min(0.1n,0.3), & c(s,\mathcal{Y})=0.
\end{cases}
\end{equation}
A correct answer receives reward $1.1$ without search and $2.0$ with search.
For incorrect answers, valid formatting contributes $0.1$ and the search-use
bonus is capped at $0.3$, so the total reward remains negative.

\begin{promptbox}{System Prompt}
Given the following functions, please respond with a JSON for a function call
with its proper arguments that best answers the given prompt.\par

Respond in the format "name": function name, "parameters": dictionary of
argument name and its value. Do not use variables.

\begin{lstlisting}[style=promptcode]
{
  "function": {
    "description": "Searches the web for relevant information based on the given query.",
    "name": "search",
    "parameters": {
      "properties": {
        "query_list": {
          "description": "A list of fully-formed semantic queries. The tool will return search results for each query.",
          "items": {"type": "string"},
          "type": "array"
        }
      },
      "required": ["query_list"],
      "type": "object"
    }
  },
  "type": "function"
}
\end{lstlisting}
\end{promptbox}

\subsection{RL Training Configuration}
All experiments were conducted on 8 NVIDIA H200 GPUs. We use Qwen3-4B-Instruct for tool-integrated mathematical reasoning and Llama-3.2-3B-Instruct for search-augmented reasoning.

\paragraph{Supervised Fine-Tuning.}
Before reinforcement learning, we perform supervised fine-tuning (SFT) for each policy backbone. For tool-integrated mathematical reasoning, Qwen3-4B is fine-tuned on the ReTool SFT corpus~\citep{feng2025retool}. For search-augmented reasoning, Llama-3.2-3B-Instruct is fine-tuned on 2,000 examples sampled from the Chain-of-Agents SFT corpus~\citep{li2025chain}. The SFT configurations are summarized in Table~\ref{tab:sft_configuration}.

Table~\ref{tab:task_config} summarizes the default training configurations for the two task settings. Method-specific hyperparameters and deviations, including correction thresholds and update schedules, are specified below. Critic-related settings apply only to methods with a learned value function. Prompt templates and reward functions are described in Section~\ref{app:prompts_rewards}.

\subsubsection{Method-specific configurations.}
\label{ape:method-specific-configurations}

Table~\ref{tab:task_config} summarizes the shared default configurations. Unless otherwise specified, losses are averaged per response for tool-integrated mathematical reasoning and per token for search. Method-specific settings and deviations are described below.

\paragraph{PPO.}
PPO uses the clipped surrogate objective with standard GAE and follows the default configuration in Table~\ref{tab:task_config}.

\paragraph{DAPO.}
DAPO uses group-relative advantages without a learned critic. Rewards are centered and standardized within each prompt group, without additional batch-level advantage normalization. Dynamic sampling retains groups with nonzero reward variance and replenishes discarded groups until the training batch is complete. For tool-integrated mathematical reasoning, DAPO additionally replaces response-level loss averaging with token-level averaging. Overlong reward shaping is disabled.

\paragraph{SAO.}
SAO replaces the PPO clipped surrogate with two-sided importance-ratio masking using the default masking interval in Table~\ref{tab:task_config}. We sample one response per prompt in both task settings, so each training batch contains responses from 512 distinct prompts.

\paragraph{AReaL.}
AReaL uses standard GAE and a decoupled actor objective, clipping the learner-to-proximal ratio to the default interval in Table~\ref{tab:task_config} while applying a separate importance weight to correct proximal-to-behavior mismatch. Tokens whose proximal-to-behavior ratio exceeds $5.0$ are excluded from the actor loss. Each rollout batch contains 1,024 responses and is split into two disjoint minibatches of 512 responses, yielding two actor updates with the proximal policy fixed across both. For search, the actor learning-rate warmup lasts 25 optimizer updates.

\paragraph{V-trace.}
Following~\citet{espeholt2018impala}, V-trace uses truncation thresholds of $1$ for value correction, trace coefficients, and policy-gradient correction, i.e., $\bar{\rho}=\bar{c}=\bar{\rho}_{\mathrm{PG}}=1$. The actor uses the V-trace policy-gradient objective without advantage normalization, while the critic is trained against the corresponding V-trace target without PPO-style value clipping.

\paragraph{COPC.}
COPC uses base clipping margins $\epsilon_\ell=0.4$ and $\epsilon_h=0.56$, with interval controls $(\eta_P,\eta_A)=(0.5,0.5)$ in the main experiments. This gives the interval $[0.80,1.28]$ for both policy-side masking and advantage-side ratio clipping. COPC w/o RC sets $\eta_A=0$, recovering standard GAE on the advantage side, while retaining $\eta_P=0.5$ and all other settings.

\subsubsection{Inference configurations.}

\paragraph{Tool-integrated mathematical reasoning.}
We sample 32 responses per problem using a temperature of $0.6$ and top-$p=1.0$. Each response has a total token budget of 38,912 across all interaction turns, including tool observations. We report avg@32 accuracy on AIME24, AIME25, AIME26, BRUMO25, and BeyondAIME.

\paragraph{Search.}
We use greedy decoding with a temperature of $0$ and top-$p=1.0$, generating one response per question with a maximum response budget of 8,192 tokens. We report normalized exact-match accuracy on NQ, TriviaQA, PopQA, HotpotQA, 2WikiMultiHopQA, MuSiQue, and Bamboogle.

\begin{table}[t]
\centering
\caption{Supervised fine-tuning (SFT) configurations.}
\vspace{4pt}
\label{tab:sft_configuration}
\renewcommand{\arraystretch}{1.2}
    \begin{tabular}{ll}
        \toprule
        Hyperparameter & Value \\
        \midrule
        Optimizer & AdamW \\
        Learning rate & $1\times10^{-5}$ \\
        Weight decay & 0.01 \\
        Learning-rate schedule & cosine \\
        Warmup ratio & 0.1 \\
        Global batch size & 16 \\
        Maximum sequence length & 16,384 (math) / 8,192 (search) \\
        Training epochs & 2 \\
        \bottomrule
    \end{tabular}
\end{table}

\begin{table}[t]
\centering
\caption{Default training configurations for tool-integrated mathematical reasoning and search. Rollout generation and policy optimization are executed asynchronously on separate worker groups. Method-specific deviations are described in the accompanying text.}
\label{tab:task_config}
\renewcommand{\arraystretch}{1.08}
    \begin{tabular*}{\linewidth}{@{\extracolsep{\fill}}ccc@{}}
    \toprule
    Configuration & \textbf{Math-with-Tool} & \textbf{Search} \\
    \midrule
    \multicolumn{3}{@{}l}{\textit{General training hyperparameters}} \\
    \addlinespace
        Policy model & Qwen3-4B-Instruct & Llama-3.2-3B-Instruct \\
        Samples per prompt & 8 & 4 \\
        Rollout batch size (responses) & 512 & 512 \\
        Training minibatch size (responses) & 512 & 512 \\
        Actor updates per rollout batch & 1 & 1 \\
        Max prompt length (tokens) & 2048 & 512 \\
        Max response length (tokens) & 16384 & 8192 \\
        Max assistant turns & 8 & 8 \\
        Actor learning rate & $1\times10^{-6}$ & $1\times10^{-6}$ \\
        Max staleness $S$ & 8 & 14 \\
        Actor ratio clip interval & $[0.80,1.28]$ & $[0.80,1.28]$ \\
    \midrule
    \multicolumn{3}{@{}l}{\textit{Actor--critic settings}} \\
    \addlinespace
        Critic pretraining samples & $512\times16$ & $512\times5$ \\
        Critic pretraining steps & 50 & 50 \\
        Critic learning rate & $5\times10^{-6}$ & $5\times10^{-6}$ \\
        Discount factor $\gamma$ & 1 & 1 \\
        Trace parameter $\lambda$ & 1 & 1 \\
        Value clipping margin & 0.2 & 0.2 \\
    \bottomrule
    \end{tabular*}
\end{table}